\definecolor{Highlight}{rgb}{0.94,1,0.94}
\definecolor{Highlight2}{rgb}{0.82, 0.84, 0.75}
\definecolor{Highlight3}{rgb}{0.92,0.94,0.85}
\definecolor{gray1}{rgb}{0.6, 0.6, 0.6}
\definecolor{gray2}{rgb}{0.75, 0.8, 0.99}
\definecolor{gray3}{rgb}{0.84,0.89,0.99}
\definecolor{gray4}{rgb}{0.93,0.95,0.99}
\newcommand{\lowrank}{GaLore\xspace}
\newtheorem{theorem}{Theorem}[section]
\newlength\myindent
\def\eqref#1{equation~\ref{#1}}
\def\1{\bm{1}}
\DeclareMathAlphabet{\mathsfit}{\encodingdefault}{\sfdefault}{m}{sl}
\SetMathAlphabet{\mathsfit}{bold}{\encodingdefault}{\sfdefault}{bx}{n}
\def\sR{{\mathbb{R}}}
\newcommand{\name}{\texttt{APOLLO}\xspace}
\newcommand{\namec}{\texttt{APOLLO-Mini}\xspace}
\DeclareRobustCommand{\questionbox}[2][pink!20]{%
\begin{tcolorbox}[   
        breakable,
        left=0pt,
        right=0pt,
        top=0pt,
        bottom=0pt,
        colback=#1,
        colframe=#1,
        width=\columnwidth,
        arc=0pt,outer arc=0pt,
        ]
        #2
\end{tcolorbox}
}
\DeclareRobustCommand{\observationbox}[2][blue!5]{%
\begin{tcolorbox}[   
        breakable,
        left=0pt,
        right=0pt,
        top=0pt,
        bottom=0pt,
        colback=#1,
        colframe=#1,
        width=\columnwidth,
        arc=1pt,outer arc=1pt,
        ]
        #2
\end{tcolorbox}
}
\DeclareRobustCommand{\conclusionbox}[2][teal!10]{%
\begin{tcolorbox}[   
        breakable,
        left=0pt,
        right=0pt,
        top=0pt,
        bottom=0pt,
        colback=#1,
        colframe=#1,
        width=\columnwidth,
        arc=0pt,outer arc=0pt,
        ]
        #2
\end{tcolorbox}
}
\definecolor{darkgreen}{HTML}{2ca02c}
\begin{document}

\twocolumn[
\mlsystitle{\includegraphics[width=0.03\textwidth]{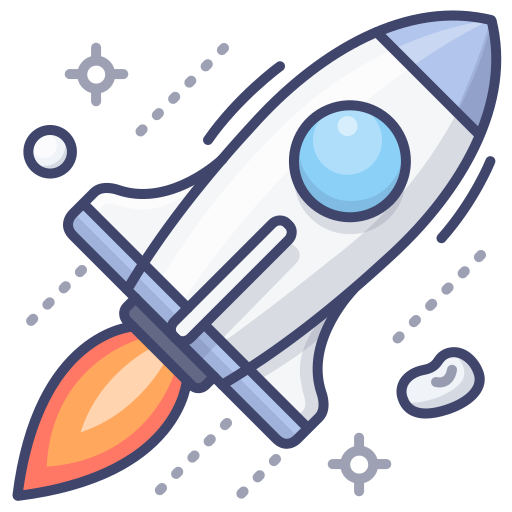} APOLLO: 
SGD-like Memory, AdamW-level Performance}



\mlsyssetsymbol{equal}{*}
\mlsyssetsymbol{coadv}{\ding{61}}

\begin{mlsysauthorlist}
\mlsysauthor{Hanqing Zhu}{equal,to,goo}
\mlsysauthor{Zhenyu Zhang}{equal,to}
\mlsysauthor{Wenyan Cong}{to}
\mlsysauthor{Xi Liu}{goo}
\mlsysauthor{Sem Park}{goo}
\mlsysauthor{Vikas Chandra}{goo}
\mlsysauthor{Bo Long}{goo}
\\
\mlsysauthor{David Z. Pan}{coadv,to}
\mlsysauthor{Zhangyang Wang}{coadv,to}
\mlsysauthor{Jinwon Lee}{coadv,goo}
\end{mlsysauthorlist}

\mlsysaffiliation{to}{Department of Electrical and Computer Engineering, The University of Texas at Austin}
\mlsysaffiliation{goo}{AI at Meta; Work was done during Hanqing's internship at Meta}

\mlsyscorrespondingauthor{David Z, Pan}{dpan@ece.utexas.edu}
\mlsyscorrespondingauthor{Zhangyang Wang}{atlaswang@utexas.edu}
\mlsyscorrespondingauthor{Jinwon Lee}{jinwonl@meta.com}

\mlsyskeywords{Machine Learning, MLSys}
\definecolor{myteal}{RGB}{128, 0, 128}

\vskip 0.2in

\begin{abstract}
Large language models (LLMs) demonstrate remarkable capabilities but are notoriously memory-intensive during training, particularly with the popular AdamW optimizer. This memory burden often necessitates using more or higher-end GPUs or reducing batch sizes, limiting training scalability and throughput, respectively. To address this, various memory-efficient optimizers have been proposed to reduce optimizer memory usage. However, they face key challenges: (i) reliance on costly SVD operations (e.g., GaLore, Fira); (ii) significant performance trade-offs compared to AdamW (e.g., Flora); and (iii) still substantial memory overhead of optimization states in order to maintain competitive performance (e.g., 1/4 rank in GaLore, and full-rank first momentum in Adam-mini).

In this work, we investigate the redundancy in AdamW's learning rate adaption rule and identify that it can be coarsened as a structured learning rate update (channel-wise or tensor-wise).
Based on this insight, we propose a novel approach, \textit{\underline{Ap}proximated Gradient Scaling for Mem\underline{o}ry Efficient \underline{LL}M \underline{O}ptimization} (\textbf{\name}), which approximate the channel-wise learning rate scaling with an auxiliary low-rank optimizer state based on pure \textit{random projection}.
The structured learning rate update rule makes \name highly tolerant to further memory reduction with lower rank, halving the rank while delivering similar pre-training performance.
We further propose an extreme memory-efficient version, \namec, which utilizes tensor-wise scaling with only a rank-1 auxiliary sub-space, achieving \textbf{SGD-level memory cost} but superior pre-training performance than Adam(W).

We conduct extensive experiments across different model architectures and tasks, showing that \name series performs \textbf{generally on-par with, or even better than Adam(W)}. Meanwhile, \name achieves \textbf{even greater memory savings than GaLore}, by almost eliminating the optimization states in AdamW.
These savings translate into significant system benefits:
(1) \textit{\textbf{Enhanced Throughput:}} \name and \namec achieve around 3$\times$ throughput on an 8$\times$A100-80GB setup compared to AdamW by fully utilizing memory to support 4$\times$ larger batch sizes.
(2) \textit{\textbf{Improved Model Scalability:}} \namec \textit{for the first time} enables pre-training LLaMA-13B model with naive DDP on A100-80G without requiring other system-level optimizations.
(3) \textit{\textbf{Low-End GPU Friendly Pre-training:}} Combined with quantization, the \name series \textit{for the first time} enables the training of LLaMA-7B from scratch on a single GPU using less than 12 GB of memory. 
\vspace{-5pt}
\begin{center}
\small
    \textbf{Website:}
    \href{https://zhuhanqing.github.io/APOLLO/}{\faGlobe\ \textcolor{myteal}{\texttt{https://zhuhanqing.github.io/APOLLO}}} \\
    \textbf{Code:} \href{https://github.com/zhuhanqing/APOLLO}{\faGithub\ \textcolor{myteal}{\texttt{https://github.com/zhuhanqing/APOLLO}}}
\end{center}
\vspace{-10pt}

\end{abstract}

]

\printAffiliationsAndNotice{\mlsysEqualContribution \mlsysEqualAdvising} 

\section{Introduction}
\label{sec:intro}
\begin{figure*}[!htb]
    \centering
    \includegraphics[width=1 \linewidth]{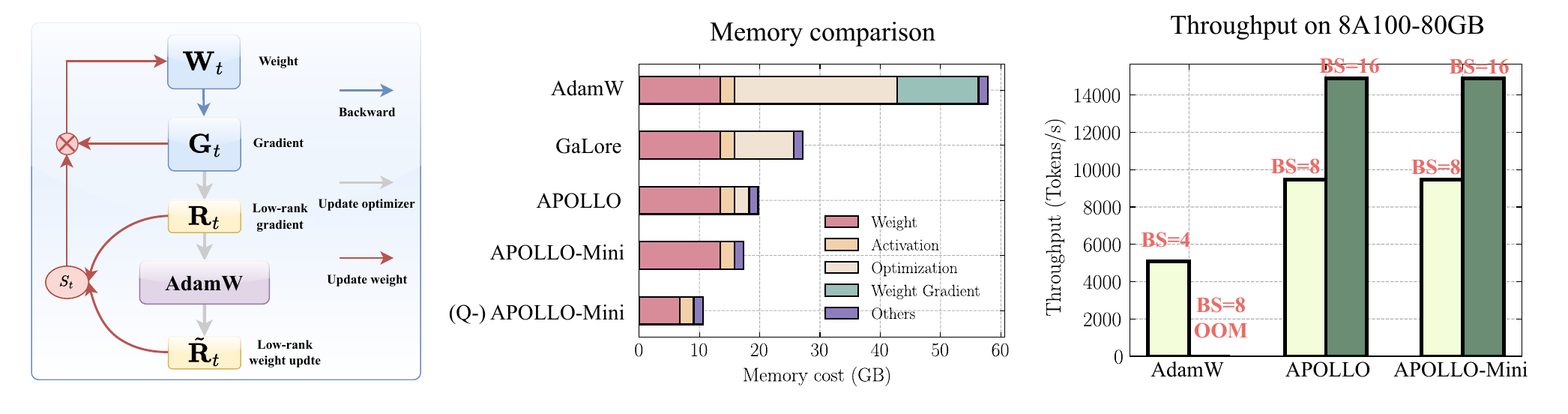}
    \vspace{-6mm}
    \caption{(Left) Overview of our \name optimizer; (Middle) Memory breakdown comparison for a single batch size, where both GaLore and our method employ the layer-wise gradient update strategy~\cite{lv2023full}. The (Q-) prefix indicates the integration of INT8 weight quantization, as utilized in~\cite{zhang2024q}; (Right) End-to-end training throughput on 8 A100-80GB GPUs. 
    }
    \label{fig:teasor}
    \vspace{-10pt}
\end{figure*}

\begin{figure}
    \centering
    \includegraphics[width=0.98\linewidth]{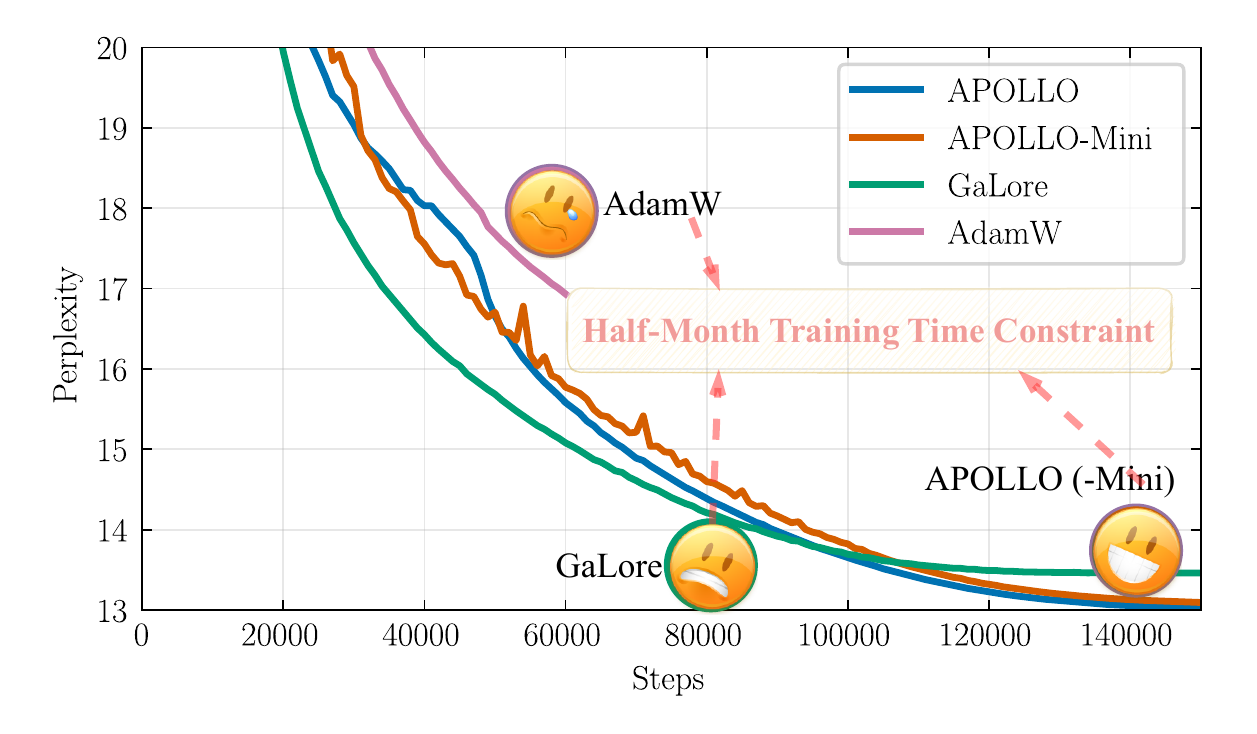}
    \vspace{-4mm}
    \caption{Comparison of Validation perplexity on LLaMA-7B.}
    \vspace{-4mm}
    \label{fig:7b-curve}
\end{figure}
Large Language Models (LLMs) have achieved remarkable progress across various domains~\cite{brown2020language, kocon2023chatgpt, dubey2024llama}, largely due to substantial increases in model size, now reaching billions of parameters. 
Training these high-dimensional models demands robust optimization techniques, with the Adam(W) optimizer~\cite{kingma2014adam,loshchilov2017decoupled} emerging as the de-facto standard for stabilizing LLM training~\cite{zhang2024transformers} by tracking both first-order and second-order moments. 
Despite its effectiveness, Adam(W) incurs significant memory overhead, as maintaining both moments effectively triples the memory required relative to the model’s parameter size. 
This results in excessive memory consumption for the optimizer,
even with a single batch.
For instance, training a LLaMA-7B model from scratch requires at least 58 GB of memory, with 28 GB devoted to AdamW’s optimizer states~\cite{zhao2024galore}. 
For larger models like GPT-3, with 175 billion parameters, memory demands reach 700 GB for the model alone, leading to a staggering 1.4 TB requirement for AdamW’s optimizer states.

This excessive optimizer memory usage poses significant challenges in training large-scale LLMs. It compels the community to either use more and higher-end GPUs, or to reduce batch sizes. However, scaling training clusters introduce highly non-trivial communication and infrastructure overheads~\cite{jiang2024megascale}; smaller batch sizes come at the cost of training throughput; and high-end GPUs are often inaccessible to researchers with limited resources.

Significant research efforts have focused on addressing the high memory costs of training LLMs. One approach involves reducing the parameter volume through methods such as designing smaller-scale LLMs~\cite{liu2024mobilellm, tang2024rethinking}, employing sparse model training~\cite{liu2022more, thangarasa2023spdf}, and leveraging low-rank adaptation~\cite{hu2021lora}. Although these techniques effectively reduce memory usage, they restrict the optimization space of model parameters and often result in performance trade-offs~\cite{biderman2024lora}, particularly during pretraining~\cite{lialin2023relora}.

Another avenue of research focuses on designing memory-efficient optimizers that reduce memory consumption while achieving performance on par with Adam(W). This includes exploring redundancy in optimizer states~\cite{zhang2024adam} and leveraging low-rank properties~\cite{zhao2024galore, chen2024fira}. Among low-rank-based methods, GaLore \cite{zhao2024galore} stands out, enabling full-parameter training of LLMs by performing low-rank gradient updates through Singular Value Decomposition (SVD). Fira \cite{chen2024fira} enhances GaLore by incorporating the error residual between the full-rank gradient and its low-rank approximation, effectively simulating full-rank updates. LDAdam \cite{robert2024ldadam} also integrates a generalized error feedback mechanism to explicitly account for the compression of gradient and optimizer states. 

However, the periodic updates to the gradient subspace via SVD (e.g., every 200 iterations)
incurs a computational cost of $O(mn^2)$, prohibitive when the matrix dimensions, $m$ and $n$, are large, as is the case with LLMs. 
For instance, in the case of the LLaMA-7B model, a single subspace update can take approximately 10 minutes, whereas inference only takes seconds. This substantial overhead significantly reduces training throughput, as demonstrated in Fig.~\ref{fig:fira:throuput}. Several GaLore variants, therefore, explore replacing the SVD projection with online PCA \cite{liang2024memory} or random projections \cite{he2024subspace}, yielding provable convergence. 

In contrast, the recently proposed Adam-mini~\cite{zhang2024adam} demonstrates that a block-wise second moment $\mathbf{V}$ suffices for learning rate adjustments, offering an orthogonal and more efficient alternative. However, achieving performance on par with AdamW requires careful handling of different model components to maintain compatibility with its optimization dynamics.

In this paper, we effectively integrate the two idea streams of  \textit{low-rank approximation} and \textit{optimizer state redundancy}, introducing a unified framework that achieves significant memory savings (below GaLore and its variants \& close to SGD) while maintaining or surpassing the performance of Adam(W). Our \textbf{key observation} is that AdamW's element-wise learning rate update rule can be effectively restructured into a channel-wise or even tensor-wise format, where each channel or tensor shares the same gradient scaling factor. 
To enable this structured gradient scaling, we introduce a memory-efficient approximation for the scaling factors using an auxiliary optimizer state, requiring only lower-dimensional gradient information as input. 
This significantly reduces memory usage by leveraging a compressed representation of the gradient information. 
Additionally, we eliminate the need for costly SVD-based low-rank projections by adopting an \textbf{SVD-free} method based on random projections. We show that a much lower rank, or even a \textbf{rank-1 approximation}, is sufficient to capture the structured gradient scaling factors effectively. This innovation allows for a simpler and more efficient training process without compromising performance. Our new memory-efficient optimizer for LLM training, named \textit{\underline{Ap}proximated Gradient Scaling for Mem\underline{o}ry Efficient \underline{LL}M \underline{O}ptimization} (\textbf{\name}), not only achieves better performance than AdamW but also delivers greater memory savings compared to GaLore.

Our key contributions are as follows:
\begin{itemize}
    \vspace{-1.5mm}
    \item \textbf{Structured Learning Rate Update for LLM Training}: We show that structured learning rate updates, such as channel-wise or tensor-wise scaling, are sufficient for LLM training. This addresses redundancy in AdamW's element-wise learning rate update rule and reduces computational overhead.
    \vspace{-1.5mm}
    \item \textbf{Approximated Channel-wise Gradient Scaling in a Low-Rank Auxiliary Space (\name)}: We propose a practical and memory-efficient method to approximate channel-wise gradient scaling factors in an auxiliary low-rank space using pure random projections. \name achieves superior performance to AdamW, even with lower-rank approximations, while maintaining excellent memory efficiency.
    \vspace{-1.5mm} 
    \item \textbf{Minimal-Rank Tensor-wise Gradient Scaling (\namec)}: For extreme memory efficiency, we introduce \namec, which applies tensor-wise gradient scaling using only a rank-1 auxiliary sub-space. \namec achieves SGD-level memory costs while outperforming AdamW, demonstrating the effectiveness of our approach.\vspace{-1.5mm}    
\end{itemize}
We demonstrate the efficacy of the \name series in both pre-training and fine-tuning scenarios.
In pre-training, across a range of LLaMA model sizes from 60M to 7B parameters, \name and \namec consistently outperform AdamW, achieving up to a 2.8 reduction in validation perplexity while significantly reducing memory overhead by eliminating nearly all optimizer states. In fine-tuning, \name and \namec achieve performance on par with full fine-tuning.
Beyond these performance gains, the \name series offers practical \textbf{system-level} advantages, including:
(i) \textbf{3× better throughput} on pre-training LLaMA 7B ( Fig.\ref{fig:teasor} (right) and the 7B experiments in Fig.\ref{fig:7b-curve}); (2) \textbf{Extreme training memory savings}. By combining \namec with weight quantization, we \underline{\textbf{set a new record for memory efficiency}}: pre-training a LLaMA 7B model requires only \textbf{12GB} of memory (Fig.~\ref{fig:teasor} (middle)). More information can be found in Section~\ref{sec:memory}. These results establish \name and \namec as highly efficient and scalable solutions for both pre-training and fine-tuning of LLMs, offering compelling improvements in performance, memory usage, and throughput.

\section{Related Work}

\subsection{Algorithm-Level Memory-Efficient Training}
\vspace{-2mm}

Numerous algorithmic improvements have been introduced to tackle the substantial memory overhead in training LLMs. One category focuses on reducing the number of trainable parameters to save memory costs. This includes approaches such as developing high-quality, small-scale models~\cite{liu2024mobilellm, tang2024rethinking}, introducing sparsity during training~\cite{liu2022more, thangarasa2023spdf}, and implementing low-rank adaptation~\cite{hu2021lora}. While these methods are effective at reducing memory usage, they often fall short in achieving comparable performance, especially in pre-training scenarios for large models.

Another avenue of research targets advancements in optimizers, as exemplified by works such as GaLore~\cite{zhao2024galore}, Fira~\cite{chen2024fira}, Flora~\cite{hao2024flora}, Adam-mini~\cite{zhang2024adam}, GaLore-mini~\cite{huang2024galore}, LDAdam~\cite{robert2024ldadam}, GoLore~\cite{he2024subspace}, and LoQT~\cite{loeschckeloqt}. These approaches have made notable progress but still face significant challenges. Some methods rely on computationally expensive SVD operations (e.g., GaLore and Fira), although recent research shows that random projections can effectively compress gradients during later training stages while still requiring SVD early on~\cite{he2024subspace}. Others either exhibit noticeable performance gaps compared to AdamW, or demand substantial memory overhead to maintain competitive performance, as seen in GaLore's 1/4 rank requirement and Adam-mini's reliance on full-rank first momentum.

In contrast, \name achieves efficient memory usage entirely without relying on SVD while delivering performance that matches or even surpasses AdamW. Moreover, our extreme variant, \namec, drives memory costs down to SGD levels while maintaining or exceeding the performance of AdamW, setting a new benchmark for memory-efficient optimization.

\subsection{System-Level Memory Efficiency Optimization}
\vspace{-2mm}
Several system-level techniques have been developed to reduce memory usage in LLM training \citep{chen2016training, ren2021zero}. Activation checkpointing~\cite{chen2016training} recomputes activations during backward instead of storing them for the whole training iteration, reducing memory requirements. 
Quantization \citep{dettmers2024qlora} reduces memory requirements by utilizing lower bit data formats.
Memory offloading \citep{zhang2023g10, ren2021zero} reduces GPU memory consumption by leveraging non-GPU memory. Our method, \name, is orthogonal to these system-level optimizations and can be seamlessly integrated to achieve greater memory efficiency.
Furthermore, by eliminating the need for SVD, \name is more system-friendly,  requiring only a cheap general matrix multiplication to complete the projection step.

\section{Coarsened Learning Rate Update Rule Is Enough for LLMs}
In this section, 
we first revisit the Adam(W)~\cite{kingma2014adam,loshchilov2017decoupled} and reformulate it as an adaptive learning rate algorithm without explicit momentum term (Section~\ref{sec:reform_adam}).
Then, we propose that the element-wise learning rate update rule can be coarsened with a structured channel-wise learning rate adaptation strategy, with even slightly better model performance by empirical verification.

\subsection{Reformulating AdamW as a Pure Adaptive Learning Rate Algorithm}
\label{sec:reform_adam}

\paragraph{Vanilla AdamW update rule.} 
AdamW has established itself as the go-to optimizer for Transformer training, leveraging both \textbf{first moment} (the mean of past gradients) and \textbf{second moment} (the variance of past gradients) to adjust updates. This dual momentum-based approach has proven superior to purely first-order optimizers like SGD~\cite{zhang2024transformers}. Disregarding weight decay, the vanilla AdamW update rule is as follows:

At time step \( t \), given a weight matrix \( \mathbf{W} \in \mathbb{R}^{m \times n} \) ($m \leq n$) with gradient \( \mathbf{G}_t = -\nabla_{\mathbf{W}} \phi_t(\mathbf{W}_t) \), the standard AdamW update rule is defined as:
\begin{equation}
\label{eq:adam}
    \mathbf{W}_{t+1} = \mathbf{W}_t - \eta \cdot \tilde{\mathbf{G}_t}, \quad \tilde{\mathbf{G}_t} = \frac{\mathbf{M}_t}{\sqrt{\mathbf{V}_t} + \epsilon}
\end{equation}
Here, \( \eta \) is the learning rate and \( \epsilon \) is a small constant for numerical stability. 
The first and second moment, \(\mathbf{M}_t \) and \(\mathbf{V}_t \), are computed as exponentially weighted averages:
\[\mathbf{M}_t = \beta_1 \mathbf{M}_{t-1} + (1-\beta_1) \mathbf{G}_t\]
\[\mathbf{V}_t = \beta_2 \mathbf{V}_{t-1} + (1-\beta_2) \mathbf{G}^2_t\]

where $\beta_1, \beta_2 \in [0, 1)$ are the exponential decay rates.

\paragraph{Viewing AdamW as an adaptive learning rate algorithm without momentum.} The above update rule in \eqref{eq:adam} can then be reformulate as an element-wise \textbf{gradient scaling rule} with a gradient scaling factor $\mathbf{S}=\frac{\tilde{\mathbf{G}_t}}{\mathbf{G}_t} \in \sR^{m \times n}$ over the raw gradient \( \mathbf{G}_t \), \textit{i.e.},
\begin{equation}
    \label{eq:reform_adam}
   \mathbf{W}_{t+1} = \mathbf{W}_t - \eta \cdot \frac{\tilde{\mathbf{G}_t}}{\mathbf{G}_t} \cdot \mathbf{G}_t     
\end{equation}

In other words, the effectiveness of AdamW can be viewed as the result of a \textbf{variance-aware learning rate schedule} per element in raw gradient $\mathbf{G}_t$ using the corresponding element in $\mathbf{S}$, where elements with higher variance in \( \mathbf{V}_t \) are scaled down to reduce unstable updates. While this reformulation is very straightforward, it paves the way for subsequent analysis. It also provides a convenient strategy to analyze other momentum algorithms through ``SGD-like" lens (e.g., all reduced to adaptive SGD learning rates).

\subsection{Coarsening Element-wise Learning Rate Adjustment in a Structured Manner}

While the element-wise learning rate update rule in AdamW is effective, it can be \textbf{overly sensitive to noisy gradients} in specific parameters, especially in high-dimensional models like large language models (LLMs).
Recent work, such as Adam-mini~\cite{zhang2024adam}, proposes grouping parameters into blocks and applying a \textbf{block-wise learning rate adjustment} to reduce memory usage while maintaining on-par performance as Adam(W).
However, the block-wise approach in Adam-mini~\cite{zhang2024adam} requires carefully chosen block sizes for different modules in Transformers and only achieves memory savings for the second moments, leaving the first moment memory unaffected.

\paragraph{A more structured learning rate update rule.} Inspired by findings of optimizer redundancy, we propose an effective simplification by coarsening the element-wise adaptive learning rate rule in \eqref{eq:reform_adam} into 
a \textbf{structured channel-wise adaptation}. We group parameters based on the larger dimension of the weight tensors. 
The element-wise scaling factor \( \mathbf{S} = \frac{\tilde{\mathbf{G}}_t}{\mathbf{G}_t} \) is then simplified into a \textbf{channel-wise} format
, \( s \in \mathbb{R}^{1 \times n} \), where each element $s_j$ for each channel \( j \) is:
\begin{equation}
    \label{eq:col_scale}
    s_j = \frac{\| \tilde{\mathbf{G}}_t[:, j] \|_{2}}{\| \mathbf{G}_t[:, j] \|_{2}}
\end{equation}
where \( \| \cdot \|_{2} \) denotes the $\ell_2$ norm. Then, the final gradient scaling rule becomes \(\tilde{\mathbf{G}}_t = \mathbf{S} \cdot \mathbf{G}_t = \mathbf{G}_t \cdot \text{diag}(s).
\)

As suggested by the authors of concurrent work Fira~\cite{chen2024fira},
our channel-wise approximation of the element-wise gradient scaling rule \(\frac{\tilde{\mathbf{G}_t}}{\mathbf{G}_t}\) shares a similar format the scaling factor in the Fira, which is used for normalizing the error residual between low-rank GaLore and full-rank gradients.
While our approach shares a similar mathematical form, \textit{as being a straightforward computation of \(\ell_2\)-norm ratios}, it originates from a fundamentally distinct perspective.
We argue that the element-wise gradient scaling rule in \eqref{eq:reform_adam} is unnecessarily fine-grained and can be effectively replaced with structured \textit{channel-wise} or \textit{tensor-wise} adaptation. 
In contrast, Fira seeks to normalize the error residual between low-rank GaLore updates and full-rank updates based on the observation that channel-wise gradient norm ratios between low-rank and full-rank optimizers are inherently similar.
Our method, however, establishes a different finding: the low-rank approximated channel-wise gradient scaling factor, \(\frac{\tilde{\mathbf{G}_t}}{\mathbf{G}_t}\), follows a predictable ratio of \(\sqrt{r/n}\) (see Theorem~\ref{theorm:main}) compared to full-rank optimization, which differs fundamentally from Fira's observations.

\paragraph{Empirical validation.} We first empirically explore the effectiveness of the proposed update rule where we compared the training loss of the original element-wise learning rate adaptation with our proposed channel-wise adaptation on a LLaMA-130M model. 
As shown in Figure~\ref{fig: loss-motivation}, both approaches achieve similar loss reduction over training steps, 
demonstrating that the structured adaptation effectively maintains performance. 
In fact, the channel-wise adaptation achieves slightly better perplexity 24.43 (AdamW: 25.08), further supporting the effectiveness of our approach.
However, we notice that our channel-wise learning rate adaption ({\textcolor{orange}{orange curve}) shows a significant spike at the early stage, which is due to the unstable gradient at the early stage. 
Instead of applying the vanilla gradient clipping method, we use the Norm-growth Limiter (NL) in ~\cite{chen2024fira} to limit the consecutive gradient growth, as it is shown slightly more effective than gradient clipping:
\begin{equation}{\label{Fira_limiter}}
\text{if } \frac{\displaystyle || \tilde{\mathbf{G}}_t ||}{\displaystyle || \tilde{\mathbf{G}}_{t-1} ||} > \gamma \text{ then } \tilde{\mathbf{G}}_t \leftarrow  \frac{\tilde{\mathbf{G}}_t }{\displaystyle || \tilde{\mathbf{G}}_t ||} \cdot \gamma \displaystyle || \tilde{\mathbf{G}}_{t-1} ||
\end{equation}
where $\gamma$ is a threshold to ensure that the rate of gradient growth remains controlled. 
This approach limits the magnitude of gradient norm increases, particularly for the unstable gradients in the early stages, thereby preventing loss spikes ({\textcolor{darkgreen}{green curve}), leading to further better perplexity 24.11.
We, by default, use the NL in our method and set $\gamma=1.01$.

\observationbox{
\textit{Takeaways \ding{172}: A structured learning rate update is sufficient for LLM training.}
}

This observation suggests that effective optimization can be achieved by applying adaptive learning rates at a coarser granularity, such as channel-wise, rather than at the element-wise level. This insight forms the basis for the memory-efficient methods we propose in the next section.

\begin{figure}[h] 
\centering 
\includegraphics[width=0.95\linewidth]{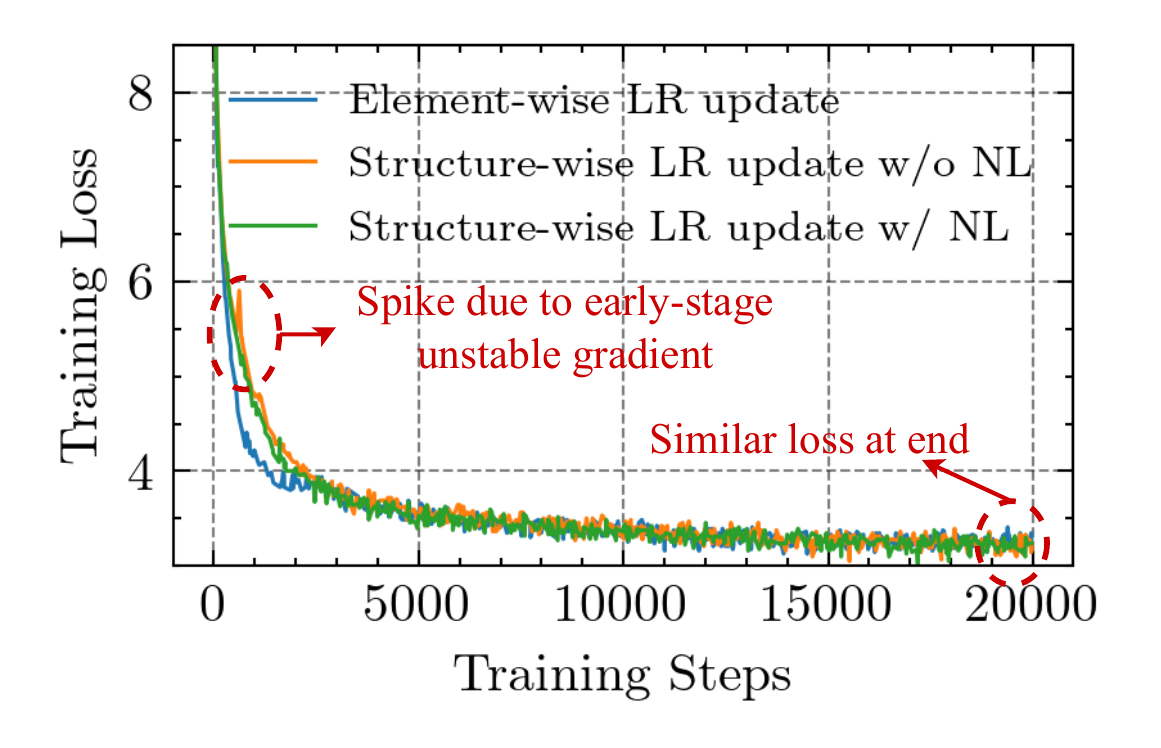}
\vspace{-4mm}
\caption{Training loss comparison between Element-wise and Channel-wise Learning Rate (LR) Adaptations with or without norm limiter (NL) on the LLaMA-130M model.}
\label{fig: loss-motivation} 
\vspace{-10pt}
\end{figure}

\section{\name: Approximated Gradient Scaling for Memory Efficient LLM Optimization}

\paragraph{From observation to practical Benefit.} 
While coarsening gradient scaling factors is effective, it does not inherently reduce optimizer memory usage. 
Computing structured gradient scaling factors still requires access to the full optimizer states $\mathbf{M}_t$ and $\mathbf{V}_t$, which consume significant memory. 
This brings us to a critical question:
\questionbox{ 
\textit{Question \ding{172}: Can structured learning rate adaptation be converted into practical, memory-efficient optimization?} 
}

\subsection{\name: Approximate Structural Gradient Scaling for LLM Optimization}
\label{subsec:apollo_channel_wise}

\definecolor{highlightcolor}{rgb}{0.9, 0.9, 0.9}  %
\begin{algorithm}[tb]
   \caption{AdamW with \name/\namec}
   \label{alg:appolo_adam}
 \begin{algorithmic}
   \STATE {\bfseries Input:} A weight matrix $\textbf{W} \in \mathbb{R}^{m \times n}$ with $m \leq n$. Step size $\eta$, scale factor $\alpha$, decay rates \{$\beta_1, \beta_2$\}, weight decay $\lambda$, rank $r$, subspace update frequency $T$.

   \STATE \textbf{Initialize}: $t \gets 0$
   \REPEAT
   \STATE \textcolor{gray}{\# Step 1: Calculate gradient into low rank space.}
   \STATE $\mathbf{G}_t \in \mathbb{R}^{m \times n} \gets - \nabla_{\mathbf{W}} \phi_t(\mathbf{W}_t)$ 
   \IF{$t \bmod T = 0$}
   \STATE \colorbox{highlightcolor}{$\mathbf{P}_t \gets \mathcal{N}_{seed}(0, 1/ r)$}
   \STATE seed $\gets$ an independent new random seed
   \ENDIF
   \STATE $\mathbf{R}_t \gets \mathbf{P}_{t} \mathbf{G}_t$
   
   \STATE \textcolor{gray}{\# Step 2: Obtain low rank optimization states, $\mathbf{M}_t, \mathbf{V}_t$.}
   \STATE $\mathbf{M}_t^{R}, \mathbf{V}_t^{R} \gets \mathrm{AdamW}(\mathbf{R}_t, \beta_1, \beta_2, \lambda=0)$
   \STATE $\tilde{\mathbf{R}}_t \gets \mathbf{M}^{R}_t / (\sqrt{\mathbf{V}^{R}_t} + \epsilon)$
   \STATE \textcolor{gray}{\# Step 3: Obtain approximated gradient scaling factor.}
   \IF{\name}
    \STATE \colorbox{highlightcolor}{$\mathbf{S}\leftarrow \text{diag}(s^R_0, s^R_1, ..., s^R_m)$} \COMMENT{$s^R_i =  \frac{\| \tilde{\mathbf{R}}_t[:, j] \|_2}{\|\mathbf{R}_t[:, j] \|_2}$}
    \ELSIF{\namec}
    \STATE \colorbox{highlightcolor}{$\mathbf{S} \leftarrow s^R $} \COMMENT{$s^R =  \frac{\| \tilde{\mathbf{R}}_t \|_2}{\|\mathbf{R}_t \|_2}$}
   \ENDIF
   \STATE \textcolor{gray}{\# Step 4: Update weight in original space.}
   \STATE \colorbox{highlightcolor}{$\mathbf{W}_t \gets \mathbf{W}_{t-1} + \eta \cdot \alpha \cdot \mathbf{G}_t \mathbf{S} - \eta \cdot \lambda \mathbf{W}_{t-1}$}
   \STATE $t \gets t + 1$
   \UNTIL{convergence criteria met}
    \STATE \textbf{return} $\mathbf{W}_T$
 \end{algorithmic}
\end{algorithm}

\subsubsection{Approximating Gradient  Scaling with an Auxiliary Low-Rank Space}
To address this challenge, we propose \name, which approximates the channel-wise gradient scaling in a compressed low-rank space rather than the original full-rank one, showing in Algorithm~\ref{alg:appolo_adam}.
Specifically, an auxiliary low-rank optimizer state is stored by taking the low-rank gradient $\mathbf{R}_t$ as input, which is computed as $\mathbf{R}_t = \mathbf{P}_t \mathbf{G}_t \in \sR^{r\times n} $, using a pre-defined projection matrix $\mathbf{P}_t \in \sR^{r\times m}$.

The auxiliary optimizer state will only maintain the low-rank version of the first and second moments as:
\[\mathbf{M}_t^{R} = \beta_1 \mathbf{M}_{t-1}^{R} + (1-\beta_1) \mathbf{R}_t \]
\[\mathbf{V}_t^{R} = \beta_2 \mathbf{V}_{t-1}^{R} + (1-\beta_2) \mathbf{R}^2_t \]
These low-rank moments, $\mathbf{M}_t^R$ and $\mathbf{V}_t^R$, are then 
converted into a lightweight, channel-wise scaling factors:
\begin{equation}
    \begin{aligned}
    \label{eq:col_scale_low_rank}
    s^R_j = \frac{\| \tilde{\mathbf{R}}_t[:, j] \|_2}{\| \mathbf{R}_t[:, j] \|_2}, \text{where} \
    \tilde{\mathbf{R}}_t = \frac{\mathbf{M}_t^R}{\sqrt{\mathbf{V}_t^R} + \epsilon}
    \end{aligned}
\end{equation}

In this way, \name estimates the channel-wise gradient scaling factor $s$ with the auxiliary low-rank optimizer state, saving memory from $2mn$ to $2nr$.
We will show later that the coarse-grained channel-wise scaling makes \name insensitive to low rank, unlike GaLore, which needs relatively high rank to retain performances (typically, one-quarter of the original dimension), leading to great memory saving. Details can be found at Section~\ref{sec:ablation}, \textit{A3}.

However, since \name operates in a compressed domain (\textit{i.e.} low-rank space), a key question remains:
\begin{center} 
\questionbox{ 
\textit{Question \ding{173}: Can the adaptive learning rate in the compressed space effectively approximate its behavior in the original space?} 
} 
\end{center}

Moreover, what type of low-rank projection method is ideal for this purpose? 
The default choice might be Singular Value Decomposition (SVD), as it captures the most informative components of the gradient. 
In fact, most existing low-rank optimizers for LLMs rely on SVD-based approximations to maintain accuracy, especially during pre-training. 
However, SVD is computationally expensive for large models and cannot be efficiently parallelized on GPUs, hindering the training process.
Therefore, we pose the following question:
\begin{center} 
\questionbox{ 
\textit{Question \ding{174}: Do we still need costly SVD to construct our compressed space?} 
} 
\end{center}

\subsubsection{\name Performs Well with Random Projection: SVD is Not Necessary.}
To answer the above questions, we first analyze the norm difference between the first moment $\textbf{M}_t^R$ in the \textit{compressed space} and the \textit{original space}, as well as similar results for the second state $\textbf{V}_t^R$ and $\textbf{V}_t$. 

We demonstrate that \textit{random projection can effectively bound the difference between the gradient scaling factor} in the \textit{compact} and \textit{original} space in \eqref{eq: diff}:
\begin{equation}
\label{eq: diff}
    \begin{aligned}
        \text{\textit{Original space:}} &\ s_j= \frac{\|\tilde{\mathbf{G}}_t[:, j]\|}{\|\mathbf{G}_t[:, j]\|}, \ \tilde{\mathbf{G}}_t = \frac{\mathbf{M}_t}{\sqrt{\mathbf{V}_t}} \\
        \text{\textit{Compact space:}} &\ s_j^R= \frac{\|\tilde{\mathbf{R}}_t^R[:, j]\|}{\|\mathbf{R}_t[:, j]\|}, \ \tilde{\mathbf{R}}_t^R = \frac{\mathbf{M}_t^R}{\sqrt{\mathbf{V}_t^R}}
    \end{aligned}
\end{equation}
with all small $\epsilon$ in the denominators removed for simplicity.

\begin{figure*}[ht]
    \centering
    \includegraphics[width=0.8\linewidth]{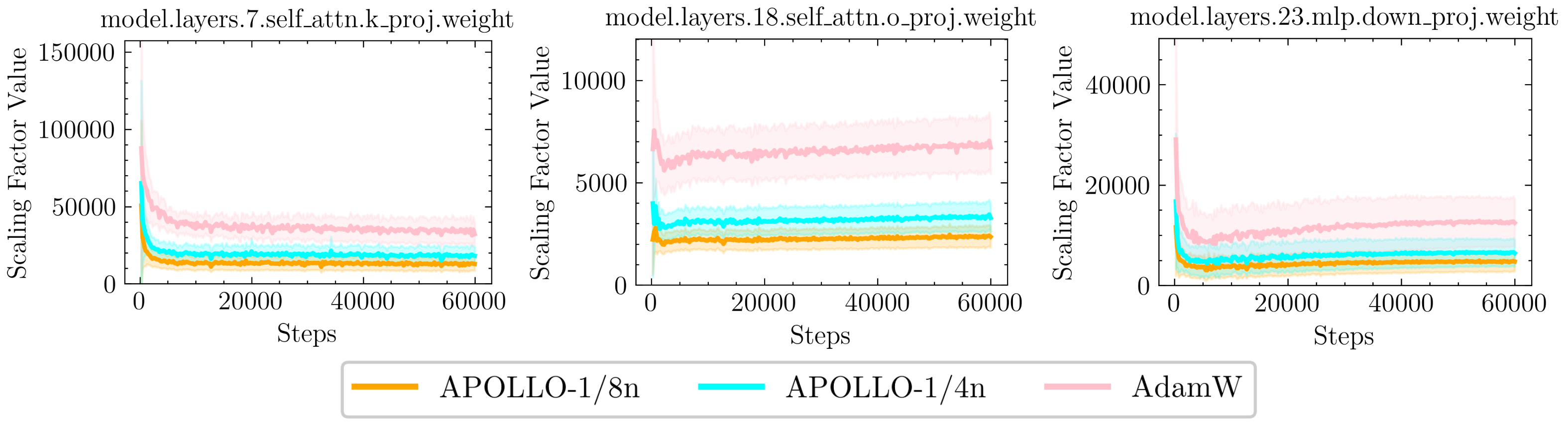}
    \caption{Visualization of the channel-wise scaling factor ratio for APOLLO with rank \(1/8 n\) and \(1/4 n\), compared with AdamW (full rank \(n\)). The empirical data aligns well with the theoretical ratios \(1 : \sqrt{2} : 2\sqrt{2}\), validating the bounds across various layer types and stages on the LLaMA-350M model. More visualization can be found at Fig.~\ref{fig:scaling_factor_comparison}.}
    \label{fig:scaling_factor_comparison_few}
\end{figure*}

\paragraph{Generating random projection matrix.}
We generate the random projection matrix $\mathbf{P}$ by sampling each element from a standard Gaussian distribution.
With high probability, projection using a random Gaussian matrix largely preserves the scaled norm from the original space based on the Johnson–Lindenstrauss lemma (JLT)~\cite{freksen2021introduction}.

\paragraph{First-order moment ratio bounding.}
We expand the computation formula of the first moment recursively, as: 
\begin{eqnarray}
    \begin{aligned}
        \mathbf{M}_t &= \beta_1 \mathbf{M}_{t-1} + (1 - \beta_1) \mathbf{G}_t \\
            &= \beta_1^t \mathbf{M}_0 + (1 - \beta_1) \sum_{k=0}^{t-1} \beta_1^k \mathbf{G}_{t-k}
    \end{aligned} \\
    \begin{aligned}
        \mathbf{M}_t^R &= \beta_1 \mathbf{M}_{t-1}^R + (1 - \beta_1) \mathbf{R}_t \\
            &= \beta_1^t \mathbf{M}_0^R + (1 - \beta_1) \sum_{k=0}^{t-1} \beta_1^k \mathbf{R}_{t-k}
    \end{aligned}
\end{eqnarray}
where $\beta_1 < 1$. 

We quantify the approximation error in the following theorem.
\begin{theorem}\label{thm:momentum}
\textbf{Approximated Channel-wise Momentum with a bound for its $\ell_2$ norm}:
$\textbf{G}_t \in \sR^{m \times n}$ is the full-rank gradient ($m \leq n$).
Let $\textbf{P}$ be a matrix of shape $\sR^{r \times m}$ where each element is independently sampled from a standard Gaussian distribution in the variance of $1/r$. 
With the projected gradient $\textbf{R}_t = \textbf{P} \textbf{G}_t$,
we have the projected gradient with a \textbf{bounded channel-wise first order moment}.
For any channel \( j \),
with probability at least $1 - 2\exp\left(-\frac{r\epsilon^2}{8}\right)$:
\[
(1-\epsilon)\|\mathbf{M}_t[:, j]\|^2 \leq \|\mathbf{M}^R_t[:, j]\|^2 \leq (1+\epsilon)\|\mathbf{M}_t[:, j]\|^2.
\]
\end{theorem}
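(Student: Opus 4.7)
The plan is to exploit the \emph{linearity} of both the projection and the momentum recursion: since $\mathbf{R}_t = \mathbf{P}\mathbf{G}_t$ and $\mathbf{M}_t^R$ satisfies the same exponential moving average as $\mathbf{M}_t$ with $\mathbf{G}_t$ replaced by $\mathbf{P}\mathbf{G}_t$, the problem collapses to a single Johnson--Lindenstrauss-style concentration statement applied to a fixed vector. Concretely, I would first show by induction on $t$ (or by directly matching the two closed-form expansions already written out just before the theorem) that, provided the initial moments are set consistently so that $\mathbf{M}_0^R = \mathbf{P}\mathbf{M}_0$ (which holds trivially when both are zero), one has $\mathbf{M}_t^R = \mathbf{P}\mathbf{M}_t$ for every $t\ge 0$. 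In particular, for any column index $j$,
\begin{equation*}
\mathbf{M}_t^R[:,j] \;=\; \mathbf{P}\,\mathbf{M}_t[:,j].
\end{equation*}

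\paragraph{Reduction to a single-vector JL bound.}
Once this identity is in hand, the channel-wise claim reduces to: for a \emph{fixed} vector $\mathbf{x} := \mathbf{M}_t[:,j] \in \mathbb{R}^m$ and $\mathbf{P} \in \mathbb{R}^{r\times m}$ with entries i.i.d.\ $\mathcal{N}(0, 1/r)$,
\begin{equation*}
\Pr\!\left[\,(1-\epsilon)\|\mathbf{x}\|^2 \le \|\mathbf{P}\mathbf{x}\|^2 \le (1+\epsilon)\|\mathbf{x}\|^2\,\right] \;\ge\; 1 - 2\exp\!\left(-\tfrac{r\epsilon^2}{8}\right).
\end{equation*}
This is a standard Johnson--Lindenstrauss concentration. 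I would prove it by observing that each coordinate $(\mathbf{P}\mathbf{x})_i = \mathbf{p}_i^\top \mathbf{x}$ is distributed as $\mathcal{N}(0, \|\mathbf{x}\|^2/r)$, and the $r$ coordinates are independent, so
\begin{equation*}
\frac{\|\mathbf{P}\mathbf{x}\|^2}{\|\mathbf{x}\|^2} \;=\; \frac{1}{r}\sum_{i=1}^{r} Y_i^2, \qquad Y_i \stackrel{\text{i.i.d.}}{\sim} \mathcal{N}(0,1).
\end{equation*}
The right-hand side is $(1/r)$ times a chi-squared random variable with $r$ degrees of freedom, whose deviations are controlled by a standard Laurent--Massart / Chernoff tail bound; after combining the upper and lower tails with a union bound, one obtains exactly the $2\exp(-r\epsilon^2/8)$ failure probability stated in the theorem (valid for $\epsilon\in(0,1/2]$, the regime the surrounding discussion is implicitly in).

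\paragraph{Assembling the final bound.}
Substituting $\mathbf{x} = \mathbf{M}_t[:,j]$ and using the identity $\mathbf{M}_t^R[:,j] = \mathbf{P}\mathbf{M}_t[:,j]$ from the first step yields the theorem statement for any individual channel $j$. Note that this is a \emph{per-column} statement matching the theorem as written; no union bound over columns or over time is required, so the stated probability is tight.

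\paragraph{Where I expect friction.}
The genuinely non-trivial mathematical content lives entirely in the chi-squared concentration inequality, so that is the step I would be most careful with: specifically, matching the constant in the exponent (here $1/8$) by using the Laurent--Massart inequality in the form $\Pr[\chi_r^2 \ge r(1+\epsilon)] \le \exp(-r\epsilon^2/8)$ and $\Pr[\chi_r^2 \le r(1-\epsilon)] \le \exp(-r\epsilon^2/8)$ for $\epsilon\in(0,1/2]$, rather than the tighter but messier $(\epsilon^2-\epsilon^3)/4$ bound. The remaining steps---the linear recursion identity and the chi-squared distributional calculation---are routine. One modeling point worth flagging is the initialization assumption $\mathbf{M}_0^R = \mathbf{P}\mathbf{M}_0$ (satisfied in practice by initializing both to zero); without it, a residual term $\beta_1^t(\mathbf{M}_0^R - \mathbf{P}\mathbf{M}_0)$ persists and would need to be absorbed into $\epsilon$ or shown to vanish, so I would state the zero-initialization assumption explicitly at the start of the proof.
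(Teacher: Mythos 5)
Your proposal is correct and follows essentially the same route as the paper's own proof: both exploit linearity to establish $\mathbf{M}_t^R[:,j] = \mathbf{P}\,\mathbf{M}_t[:,j]$ (factoring $\mathbf{P}$ out of the exponential-moving-average sum, under zero initialization) and then invoke a Johnson--Lindenstrauss / $\chi^2_r$-concentration bound on the fixed vector $\mathbf{M}_t[:,j]$ to obtain the $1-2\exp(-r\epsilon^2/8)$ guarantee. Your write-up is, if anything, slightly more careful than the paper's in flagging the initialization assumption and in naming the specific tail inequality used to justify the $1/8$ constant.
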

\textit{Proof}: Please refer to Appendix~\ref{subsubsec:first_moment}.

\paragraph{Second-order moment ratio bounding.}
Similarly, the second-order moment state can be formulated as:
\[\mathbf{V}_t = (1 - \beta_2) \sum_{k=0}^{t-1} \beta_2^k \mathbf{G}_{t-k}^2\]
\[\mathbf{V}_t^R = (1 - \beta_2) \sum_{k=0}^{t-1} \beta_2^k \mathbf{R}_{t-k}^2\]
where we assume \( \mathbf{V}_0 = 0 \) (common in most initialization).

\begin{theorem}\label{thm:variance}
\textbf{Approximated channel-wise variance with a bound for its $\ell_1$ norm}:
For any channel \( j \) and time \( t \), if 
\[
r \geq \frac{8}{\epsilon^2} \log\left(\frac{2t}{\delta}\right),
\]
then with probability at least \( 1 - \delta/2 \):
\[
(1-\epsilon)\|\mathbf{V}_t[:, j]\|_1 \leq \|\mathbf{V}_t^R[:, j]\|_1 \leq (1+\epsilon)\|\mathbf{V}_t[:, j]\|_1
\],
where \( \mathbf{V}_t[:, j] \) and \( \mathbf{V}^R_t[:, j] \) are the second moments in the original and projected spaces, respectively.

\end{theorem}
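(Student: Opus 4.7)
The plan is to reduce the claim to repeated applications of the Johnson--Lindenstrauss (JL) concentration bound on individual gradient columns, then assemble the per-step bounds through a union bound and the non-negativity of the quantities involved. The essential algebraic simplification is that both $\mathbf{V}_t$ and $\mathbf{V}_t^R$ are entry-wise non-negative exponentially weighted sums of squares, so column-wise $\ell_1$ norms collapse to weighted sums of squared $\ell_2$ norms and JL applies directly.

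First I would rewrite both $\ell_1$ norms as
\[
\|\mathbf{V}_t[:,j]\|_1 \;=\; (1-\beta_2)\sum_{k=0}^{t-1}\beta_2^k\,\|\mathbf{G}_{t-k}[:,j]\|_2^2,
\]
and, analogously,
\[
\|\mathbf{V}_t^R[:,j]\|_1 \;=\; (1-\beta_2)\sum_{k=0}^{t-1}\beta_2^k\,\|\mathbf{R}_{t-k}[:,j]\|_2^2,
\]
where $\mathbf{R}_{t-k}[:,j]=\mathbf{P}\,\mathbf{G}_{t-k}[:,j]$. This reduces the task to controlling $\|\mathbf{P}\,\mathbf{G}_{t-k}[:,j]\|_2^2$ relative to $\|\mathbf{G}_{t-k}[:,j]\|_2^2$ uniformly over all $k\in\{0,\ldots,t-1\}$.

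Next I would invoke the standard Gaussian JL concentration estimate (the same tool that underlies Theorem~\ref{thm:momentum}): since $\mathbf{P}$ has i.i.d.\ $\mathcal{N}(0,1/r)$ entries, for any fixed $x\in\mathbb{R}^m$,
\[
\Pr\!\left[\,\bigl|\|\mathbf{P}x\|_2^2 - \|x\|_2^2\bigr| > \epsilon\|x\|_2^2\,\right] \;\leq\; 2\exp\!\left(-\tfrac{r\epsilon^2}{8}\right).
\]
Applying this to each of the $t$ vectors $\mathbf{G}_{t-k}[:,j]$ and union-bounding gives a global failure probability at most $2t\exp(-r\epsilon^2/8)$; the hypothesis $r\geq (8/\epsilon^2)\log(2t/\delta)$ drives this bound below $\delta/2$ (one either uses the one-sided form of JL or absorbs a constant to match the stated probability exactly). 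Conditional on the intersection of the good events, multiplying each two-sided inequality by the non-negative weight $(1-\beta_2)\beta_2^k$ and summing in $k$ preserves the multiplicative $(1\pm\epsilon)$ factor, which upon re-identifying the two sums yields the claimed sandwich for $\|\mathbf{V}_t^R[:,j]\|_1$.

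The main obstacle is the potential coupling between $\mathbf{P}$ and the gradient history $\{\mathbf{G}_{t-k}\}_{k=0}^{t-1}$: if the trajectory of $\mathbf{W}$ is itself driven by projections depending on $\mathbf{P}$, then $\mathbf{G}_{t-k}[:,j]$ ceases to be a deterministic input to JL and the concentration step is no longer immediate. The clean resolution is to state the bound conditionally on the sigma-algebra generated by the past up to the most recent projection resampling, so that, within the window on which a given $\mathbf{P}_t$ is active, the relevant gradients are frozen with respect to its fresh randomness; alternatively one may regard the claim as an inner-loop guarantee for a fixed projection. Once this independence is pinned down, the remaining steps, namely JL concentration, a $t$-fold union bound, and summing the weighted multiplicative inequalities, are routine.
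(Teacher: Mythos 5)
Your proposal is correct and follows essentially the same route as the paper: collapse the column-wise $\ell_1$ norms of $\mathbf{V}_t$ and $\mathbf{V}_t^R$ into exponentially weighted sums of squared $\ell_2$ norms, apply the Gaussian JL concentration bound to each of the $t$ gradient columns, take a union bound over $k$ (yielding exactly the hypothesis $r \geq (8/\epsilon^2)\log(2t/\delta)$), and then sum the resulting two-sided multiplicative inequalities with non-negative weights. Your added caveat about the potential statistical coupling between $\mathbf{P}$ and the gradient trajectory is a real issue that the paper's proof also sidesteps by treating $\mathbf{P}$ as fixed over the window and the gradients as deterministic inputs; your suggestion to condition on the sigma-algebra up to the last resampling is a cleaner way to state the same implicit assumption.
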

\textit{Proof}: Please refer to Appendix~\ref{subsubsec:second_moment}.

\paragraph{Bounded update ratio $s^R / s$}
Now, we can bound the difference between the gradient scaling factor in the compact original space based on the theorem~\ref{thm:momentum} and theorem~\ref{thm:variance}:
\[s_j^R / s_j = \frac{\| \tilde{\mathbf{R}}_t[:, j] \|}{\| \mathbf{R}_t[:, j] \|} \cdot \frac{\| \mathbf{G}_t[:, j] \|}{\| \tilde{\mathbf{G}}_t[:, j] \|} = \frac{\| \tilde{\mathbf{R}}_t[:, j] \|}{\| \tilde{\mathbf{G}}_t[:, j] \|} \cdot \frac{\| \mathbf{G}_t[:, j] \|}{\| \mathbf{R}_t[:, j] \|}\]

For any channel \( j \), with probability \( \geq 1-\delta \):
\begin{equation}
\label{eq:main}
    \frac{\sqrt{1-\epsilon}}{1+\epsilon}\leq \sqrt{\frac{n}{r}}  \frac{s_j^R}{s_j} \leq \frac{\sqrt{1+\epsilon}}{1-\epsilon}.
\end{equation}

\textit{Proof}: Please refer to Appendix~\ref{subsubsec:all}.

Therefore, our \name method is theoretically sound with random projection and validated by the empirical results presented in the following section. Additionally, \name with SVD also performs well within our framework, yet incurs significant computational cost. We default to use random projection in \name.

The theorem suggests we should scale the gradient with the factor $\sqrt{\frac{n}{r}}$ to ensure consistent behavior as AdamW with structured learning rate update.
Hence, we add the gradient scale factor $\alpha$ in Algorithm~\ref{alg:appolo_adam}.
However, this gradient
scale factor can be combined with the learning rate, therefore set to 1 by default for \name.
When the $r$ is too small compared to $n$, as in our \namec case, which uses rank-1 space, we specifically assign the scaling factor by using $\sqrt{128}$.

Moreover, as suggested in GaLore, fixing the projection matrix is not ideal for high-dimensional LLM training; we also re-sample the projection matrix every $T$ step, which is effortlessly done by re-generating a new seed, which is set to 200 by default.

\textbf{Empiric evidence of the ratio $\sqrt{n/r}$}:
We empirically validate the theoretical bound of the scaling factor ratio \(\sqrt{n/r}\) derived in Equation~\ref{eq:main} using the LLaMA-350M model. Specifically, we compare the scaling factors of \name with ranks \(1/8n\) and \(1/4n\) against the full-rank AdamW baseline. The results, visualized in Fig.~\ref{fig:scaling_factor_comparison_few}, demonstrate that the observed ratios align closely with the theoretical predictions (\(\sim 0.35\) and \(0.5\) respectively), confirming the effectiveness of random projection in approximating gradient scaling factors within our framework.
Check more details in Appendix~\ref{subsec:empirical_evidence}.

\observationbox{
\textit{Take-away \ding{173}: \name can approximate the structured learning rate adaption with only random projection.}
}

\subsection{\namec: Achieve Extreme Memory Efficiency with Rank-1 Space}
\label{subsec:apollo_tensor_wise}

The rank $r$ plays a crucial role in balancing memory efficiency and the quality of the approximated gradient scaling factor. 
Although the coarsening learning rate update rule provides high tolerance to relatively low rank, we show our \name can reduce rank by half compared to GaLore with a slight impact on perplexity.
We still need to pay $n \times r$ memory cost for the optimizer state. 
If we can relax the rank requirement to 1, then the optimizer state cost is totally negligible. However, simply setting rank to 1 in \name using channel-wise gradient scaling doesn't work well due to rank-1 space sacrificing too much information. Details at Section~\ref{sec:ablation} \textit{A2}.
This leads to our next question:
\begin{center}
\questionbox{
    \textit{Question \ding{175}: “Can we further compress the optimizer state to SGD-level memory cost while matching or surpassing AdamW's performance?}
}
\end{center}

To address this, we introduce an extremely memory-efficient \name variant, called \namec, which coarsens the scaling factor into a \textit{tensor-wise scaling factor} to reduce variance during gradient scaling estimation in a rank-1 space.
The scaling factor is computed as:
$s = \frac{\| \tilde{R}_t \|_2}{\| R_t \|_2}$. Moreover, we find the \textit{tensor-wise scaling factor} estimated by rank-1 space is typically smaller than the one estimated with a larger rank, which can be theoretically justified by the theorem in ~\eqref{eq:main}.
Hence, we heuristically set a gradient scale factor $\alpha$ like GaLore to avoid performance degradation, default to set as $\sqrt{128}$.

\subsection{Savings and cost analysis}

Table~\ref{Comparison} provides a detailed comparison of memory and computational trade-offs among various memory-efficient training methods, including \namec, \name, Fira~\cite{chen2024fira}, GaLore~\cite{zhao2024galore}, and Flora~\cite{hao2024flora}. Notably, \name can purely implemented with random projection, thereby avoiding the costly SVD operations required by Fira and GaLore.

\begin{table}[htb]
\caption{Detailed comparison between Fira, GaLore, Flora, \name, and \namec. 
Denote $\mathbf{W}_t \in \mathbb{R}^{m \times n}$ ($m \leq n$), rank $r$.
\name series has a constant 2 due to storing random seed and gradient norm used for norm-worth limiter.
}
\label{Comparison}
\begin{center}
\resizebox{\linewidth}{!}{ 
\begin{tabular}{l|>{\columncolor{gray2}}c>{\columncolor{gray4}}c|ccc}
\toprule
   & \begin{tabular}[c]{@{}c@{}}\name\\ \texttt{-Mini}\end{tabular} & \name & Fira & GaLore & Flora \\
\midrule
     Weights & $mn$ & $mn$ & $mn$ & $mn$ & $mn$ \\
     Optimizer States & \begin{tabular}[c]{@{}c@{}}$2n$\\ $+ 2$\end{tabular} & \begin{tabular}[c]{@{}c@{}}$2nr$\\ $+ 2$\end{tabular} & \begin{tabular}[c]{@{}c@{}}$2nr$\\ $+ mr + 1$\end{tabular} & \begin{tabular}[c]{@{}c@{}}$2nr$\\ $+ mr$\end{tabular} & \begin{tabular}[c]{@{}c@{}}$2nr$\\ $+ 1$\end{tabular}\\ 
\midrule
    Full-Rank Gradients & \ding{52} & \ding{52} & \ding{52} & \ding{56} & \ding{56} \\
    Full-Rank Weights & \ding{52} & \ding{52} & \ding{52} & \ding{52} & \ding{52}  \\
    Pre-Training & \ding{52} & \ding{52} & \ding{52} & \ding{52} & \ding{56} \\
    Fine-Tuning & \ding{52} & \ding{52} & \ding{52} & \ding{52} & \ding{52} \\      
    \textit{w.o.} SVD & \ding{52} & \ding{52} & \ding{56} & \ding{56} & \ding{52} \\     
\bottomrule
\end{tabular}
}
\end{center}
\end{table}

In terms of memory efficiency, random projection allows \name to eliminate the need to store a projection matrix, achieving significant memory efficiency. Additionally, \name is robust to rank reduction; halving the rank has minimal impact on pre-training performance (see Table~\ref{tab:pre-train}). Our extreme variant, \namec, further maximizes memory efficiency by reducing optimizer state costs to a const number $2n+2$, making it comparable to the memory footprint of SGD, yet it retains or even surpass AdamW-level performance, as confirmed in the following experiments.

\begin{table*}[t]
    \centering
    \caption{Comparison of pretraining perplexity across various memory-efficient training approaches. We pretrain the LLaMA models with model size ranging from 60M to 1B on the C4~\cite{raffel2020exploring} dataset and report the validation perplexity. The memory overhead focus solely on weights and optimization states. Results marked with $^\text{\ding{171}}$ are collected from~\cite{zhao2024galore}. By default, we set the rank to one-quarter of the original dimension for all low-rank-based training approaches, while results marked with $^\text{\ding{61}}$ indicate the use of a halved rank, \textit{i.e.}, one-eighth of the original dimension.
    \textcolor{mydarkblue}{\textbf{For a fair comparison, we keep the same training settings as GaLore/Fira, which is not tuned on the \name series. We can further tune the hyperparameters, e.g., the learning rates, for optimal performance. Here, we report the \namec with a learning rate of 0.02 (not 0.01 in GaLore), achieving stronger results, marked with $^\ddagger$.
    }}
    }
    \label{tab:pre-train}
    \resizebox{0.9 \linewidth}{!}{
    \begin{tabular}{c|cc|cc|cc|cc}
    \toprule
    \multirow{2}{*}{Methods} & \multicolumn{2}{c|}{60M} &  \multicolumn{2}{c|}{130M} &  \multicolumn{2}{c|}{350M} &  \multicolumn{2}{c}{1B} \\
    &  \small Perplexity &  \small Memory &  \small Perplexity &  \small Memory &  \small Perplexity &  \small Memory &  \small Perplexity &  \small Memory \\ \midrule
    AdamW$^\text{\ding{171}}$ & \small 34.06 & \small 0.36G & \small 25.08 & \small 0.76G & \small 18.80 & \small 2.06G & \small 15.56 & \small 7.80G \\ \midrule
    Low-Rank$^\text{\ding{171}}$ & \small 78.18 & \small 0.26G & \small 45.51 & \small 0.54G & \small 37.41 & \small 1.08G & \small 142.53 & \small 3.57G \\
    LoRA$^\text{\ding{171}}$ & \small 34.99 & \small 0.36G & \small 33.92 & \small 0.80G & \small 25.58 & \small 1.76G & \small 19.21 & \small 6.17G \\
    ReLoRA$^\text{\ding{171}}$ & \small 37.04 & \small 0.36G & \small 29.37 & \small 0.80G & \small 29.08 & \small 1.76G & \small 18.33 & \small 6.17G \\
    GaLore$^\text{\ding{171}}$ & \small 34.88 & \small 0.24G & \small 25.36 & \small 0.52G & \small 18.95 & \small 1.22G & \small 15.64 & \small 4.38G \\ 
    Fira & \small \textbf{31.06} & \small 0.24G & \small \textbf{22.73} & \small 0.52G & \small 17.03 & \small 1.22G & \small 14.31 & \small 4.38G \\\midrule 
    \rowcolor{gray4}
    \name \textit{w.} \texttt{SVD} & \small 31.26 & \small 0.24G & \small 22.84 & \small 0.52G & \small \textbf{16.67} & \small 1.22G & \small \bf 14.10 & \small 4.38G  \\
    \rowcolor{gray4}
    \name & \small 31.55 & \small 0.24G & \small 22.94 & \small 0.52G & \small \textbf{16.85} & \small 1.22G & \small \bf 14.20 & \small 4.38G  \\
    \rowcolor{gray3}
    \name$^\text{\ding{61}}$ & \small 31.26 & \small 0.18G & \small 23.18 & \small 0.39G & \small \textbf{16.98} & \small 0.95G & \small \bf 14.25 & \small 3.49G  \\
    \rowcolor{gray2}
    \namec  & \small 31.93 & \small 0.12G & \small 23.53 & \small 0.25G & \small 17.18 & \small 0.69G & \small \bf 14.17 & \small 2.60G  \\
    \rowcolor{teal!10}
    \namec$^\ddagger$  & \small 30.95 & \small 0.12G & \small 22.85 & \small 0.25G & \small 16.63 & \small 0.69G & \small 13.95 & \small 2.60G  \\
    \bottomrule
    \end{tabular}
}
\end{table*}

\section{Experiments}
In Section~\ref{sec:train} and~\ref{sec:ft}, we systematically evaluate \name on various pre-training and downstream tasks, respectively. Section~\ref{sec:memory} compares the memory overhead and throughput of different approaches, while Section~\ref{sec:ablation} presents extensive ablation studies that analyze the impact of low-rank projection methods, rank quantity, scaling factor granularity, and provide a detailed comparison of training curves.
Section~\ref{sec:insight} shares preliminary insights on why a stateless \name
can surpass AdamW in certain scenarios,

\subsection{Memory-Efficient Pre-training with \name}~\label{sec:train}

\vspace{-4mm}
We demonstrate that \name achieves \textbf{superior pre-training performance} across various sizes of LLaMA models (60M to 7B) on the C4 dataset~\cite{raffel2020exploring}, with up to a $2.80$ reduction in validation perplexity. Additionally, \namec 
uses negligible memory budget for optimization states while outperforming both AdamW~\cite{loshchilov2017decoupled} and GaLore~\cite{zhao2024galore}.

\vspace{-4mm}
\paragraph{Setup.} We consider the LLaMA series models for pre-training, with sizes ranging from 60M to 7B. Following the training configurations used in prior works~\cite{zhao2024galore, lialin2023relora}, we pre-train each model from scratch, with a detailed description in Appendix.\ref{sub:setting}. 
The C4 dataset~\cite{raffel2020exploring}, a comprehensive corpus derived from Common Crawl data and meticulously filtered and cleaned, is used for pre-training. All experiments are conducted in BF16 data format without other quantization. 

\vspace{-4mm}
\paragraph{Baselines.} \label{sec:baselines}
For comparative analysis, we include the following baselines in our evaluation:
(i) \texttt{AdamW}: We pre-train the models using the original AdamW optimizer~\cite{loshchilovdecoupled}, maintaining the weights, gradients, and optimizer states in their full-rank format. (ii) \texttt{Low-Rank}: This approach decomposes the model weights into two low-rank matrices ($W = UV$), with both $U$ and $V$ optimized using AdamW. (iii) \texttt{LoRA}: LoRA~\cite{hu2021lora} employs low-rank adapters for memory-efficient training by decomposing the original weights as $W = W_0 + UV$. During training, only $U$ and $V$ are optimized with AdamW, while the backbone weights $W_0$ remain frozen.
(iv) \texttt{ReLoRA}: ReLoRA~\cite{lialin2023relora} is an enhanced version of LoRA specifically designed for pre-training, where the low-rank adapters $UV$ are periodically merged back into the original weights $W$ during training. 
(v) \texttt{GaLore}: GaLore~\cite{zhao2024galore} projects gradients, rather than weights, into a low-rank space, effectively reducing memory consumption for optimizer states. 
(vi) \texttt{Fira}: Fira~\cite{chen2024fira} further improves GaLore by incorporating the error residual of low-rank gradient into training. 

\vspace{-2mm}
\paragraph{Main Results.} We evaluate \name and its two variants: \name \textit{w.} \texttt{SVD}, which replaces the original random projection with SVD; and \namec, which uses a rank of 1 and computes the scaling factor in a tensor-wise manner. The memory cost of optimization states in \namec is negligible, as the rank used is substantially smaller than the original dimension, \textit{e.g.}, 1 \textit{vs.} 512 for LLaMA-60M and  1 \textit{vs.} 2048 for LLaMA-1B. Results are reported in Table~\ref{tab:pre-train}, from which several observations can be made: (i) \textbf{Performance under the same memory budget}: When the rank is set to one-quarter of the original dimension, \name consistently outperforms GaLore, achieving up to a $3.33$ reduction in perplexity; (ii) \textbf{Comparison with full-rank AdamW}: \name demonstrates superior performance while using significantly less memory. Notably, \namec incurs a memory cost similar to that of SGD while significantly outperforming AdamW and vanilla SGD is known to fail on training transformer-based models~\cite{zhang2024transformers}; (iii) \textbf{Robustness across projection methods and rank sizes}: \name exhibits robust performance under various subspace projection methods and rank sizes. For instance, with LLaMA-350M, enabling SVD projection results in only a 0.18 improvement of perplexity, while SVD is known for its time-consuming nature~\cite{zhang2024q}. This indicates that \name can maintain efficiency even without SVD, improving end-to-end throughput. Additionally, halving the rank has negligible impact on \name’s performance, further demonstrating its effectiveness across different rank budgets. Section~\ref{sec:ablation} provide a more thorough analysis of the impact of rank quantity and projection methods; (iv) \textbf{Comparison with Fira}: \name is more favorable when dealing with larger models and more training tokens. For smaller models like 60M and 130M, Fira shows slightly better performance, but \name consistently surpasses Fira as model size and training tokens increase. An in-depth comparison of training performance across model sizes and training tokens is provided in Section~\ref{sec:ablation}, \textit{A4}. The above results validate the effectiveness of \name on pre-training tasks, demonstrating that it achieves superior performance while requiring negligible memory costs for optimization states compared to AdamW.

\begin{table}[t]
    \caption{\small{Pre-training LLaMA 7B on C4 dataset for 150K steps. Validation perplexity and memory estimate (optimization states only) are reported. Results marked with $^\text{\ding{171}}$ are collected from~\cite{zhao2024galore}.} \name uses the rank of 256, and \namec uses the rank of 1.}
    \label{tab:7b_eval}
    \centering
    \resizebox{0.98 \linewidth}{!}{
    \begin{tabular}{l|c|cccc}
    \toprule
    & \multicolumn{1}{c|}{\begin{tabular}[c]{@{}c@{}}\textbf{Optimizer}\\ \textbf{Memory}\end{tabular}}       & \textbf{40K} & \textbf{80K} & \textbf{120K} & \textbf{150K} \\
    \midrule
    8-bit Adam$^\text{\ding{171}}$ & 13G & 18.09 & 15.47 & 14.83 & 14.61 \\
    8-bit \lowrank{}$^\text{\ding{171}}$ & 4.9G & 17.94 & 15.39 & 14.95 & 14.65 \\
    \rowcolor{gray4}
    \name& \textbf{1.6G} & 17.55 & 14.39 & 13.23 & \textbf{13.02 }\\
    \rowcolor{gray2}
    \namec & \textbf{0.0G} & 18.03& 14.60 & 13.32 & \textbf{13.09} \\
    \midrule
    Tokens (B) & & 5.2 & 10.5 & 15.7 & 19.7 \\
    \bottomrule
    \end{tabular}
}
\end{table}

\begin{table*}[htb]
\centering
\caption{Zero-shot performance of LLaMA-350M models pretrained with AdamW and \name-series on commonsense and math reasoning tasks.}
\resizebox{0.95\textwidth}{!}{
\begin{tabular}{c|c|c|c|c|c|c|c|c|c|c|c|c|c}
    \toprule
    \multicolumn{14}{c}{\textbf{Sequence Length: 256}} \\
    \midrule
    \small Method & \small Memory & \small Perplexity & \small BoolQ & \small RTE & \small HS & \small WG & \small OBQA & \small ARC-E & \small ARC-C & \small PIQA & \small SciQ & \small MathQA & \small Average \\
    \midrule
    \small AdamW & \small 1.37G & \small 18.80 & \small 0.5881 & \small 0.4729 & \small 0.3286 & \small 0.5335 & \small 0.304 & \small 0.3615 & \small 0.2167 & \small 0.6387 & \small 0.591 & \small 0.2047 & \small 0.3554 \\
    \rowcolor{gray4}
    \small \name & \small 0.34G & \small 16.85 & \small 0.5165 & \small 0.4729 & \small 0.3528 & \small 0.5146 & \small 0.318 & \small 0.3792 & \small 0.2517 & \small 0.6632 & \small 0.592 & \small 0.2188 & \small \textbf{0.3681} \\
    \rowcolor{gray2}
    \small \namec & \small 0.00G & \small 17.18 & \small 0.5434 & \small 0.4729 & \small 0.3481 & \small 0.5162 & \small 0.320 & \small 0.3653 & \small 0.2474 & \small 0.6469 & \small 0.591 & \small 0.2178 & \small \textbf{0.3654} \\
    \midrule
    \multicolumn{14}{c}{\textbf{Sequence Length: 1024}} \\
    \midrule
    \small Method & \small Memory & \small Perplexity & \small BoolQ & \small RTE & \small HS & \small WG & \small OBQA & \small ARC-E & \small ARC-C & \small PIQA & \small SciQ & \small MathQA & \small Average \\
    \midrule
    \small AdamW & \small 1.37G & \small 16.30 & \small 0.4917 & \small 0.4693 & \small 0.3688 & \small 0.5233 & \small 0.332 & \small 0.3729 & \small 0.2449 & \small 0.6534 & \small 0.609 & \small 0.2064 & \small 0.3712 \\
    \rowcolor{gray4}
    \small \name & \small 0.34G & \small 15.64 & \small 0.5373 & \small 0.4693 & \small 0.3850 & \small 0.4925 & \small 0.322 & \small 0.3788 & \small 0.2483 & \small 0.6681 & \small 0.624 & \small 0.2127 & \small \textbf{0.3840} \\
    \rowcolor{gray2}
    \small \namec & \small 0.00G & \small 16.12 & \small 0.5376 & \small 0.4693 & \small 0.3707 & \small 0.5217 & \small 0.324 & \small 0.3758 & \small 0.2312 & \small 0.6638 & \small 0.619 & \small 0.2224 & \small \textbf{0.3785} \\
    \bottomrule
\end{tabular} 
 }
\label{tab:downstream}
\vspace{-10pt}
\end{table*}

\vspace{-2mm}
\paragraph{Scaling up to Pre-training LLaMA-7B.}
We evaluate the pre-training of a LLaMA-7B model using AdamW, GaLore, and our \name series (\name ($r=256$) and \namec ($r=1$)) on an 8× A100 80GB setup. To ensure performance consistency, a total batch size of 512 per epoch is maintained, while micro-batch sizes are adjusted based on the memory consumption of each method. AdamW is limited to a micro-batch size of 4 due to memory constraints, whereas GaLore matches the memory usage of our methods with a micro-batch size of 8.
Considering the extended training duration typically required for fully training a LLaMA-7B model with AdamW, we allocate a fixed training time budget of around two weeks (\textbf{15 Days}) for a fair and practical comparison. The training curve, showing validation perplexity recorded every 1000 steps, is in Fig.~\ref{fig:7b-curve}.

Our experiments reveal two key benefits of the \name series on the large 7B model:
(i) \textbf{Accelerated training through saved memory, enabling larger batch sizes.} The significant memory efficiency of the \name series allows for larger batch sizes, resulting in $\sim$3× faster training throughput compared to AdamW and $\sim$2× faster throughput compared to GaLore. Notably, our methods are the only ones to complete the pre-training within the half a month timeframe.
(ii) \textbf{Superior model performance with best perplexity and reduced optimizer overhead.} Despite its efficiency, \name delivers better perplexity results than GaLore, even when GaLore employs a high rank of 1024. Midway through training, \name surpasses GaLore in performance, marking a critical intersection point where our approach demonstrates clear superiority. Furthermore, as shown in Tab.~\ref{tab:7b_eval}, the \name series achieves $>$1.5 perplexity improvement compared to the 8-bit versions of Adam and GaLore, all while maintaining significantly lower memory usage for optimizer states.

\vspace{-2mm}
\paragraph{Downstream Task Performance.}
While perplexity may not precisely assess model quality~\cite{jaiswal2023compressing}, we further evaluate the pretrained models on a suite of zero-shot common-sense and math reasoning tasks.  
We use LLaMA-350M trained with sequence lengths of 256 and 1024, selecting the best AdamW checkpoint by sweeping its learning rates.  
Table~\ref{tab:downstream} presents results across multiple datasets, including BoolQ~\cite{clark2019boolq}, RTE~\cite{wang2018glue}, HellaSwag (HS)~\cite{zellers2019hellaswag}, Winogrande (WG)~\cite{sakaguchi2021winogrande}, OpenBookQA (OBQA)~\cite{mihaylov2018can}, ARC (ARC-Easy (ARC-E), ARC-Challenge (ARC-C))~\cite{clark2018think}, PIQA~\cite{bisk2020piqa}, SciQ~\cite{SciQ}, and MathQA~\cite{amini2019mathqa}.  
We confirm that models pretrained with the \name series not only achieve \textbf{lower perplexity} but also demonstrate \textbf{superior performance on downstream tasks} compared to AdamW.

\begin{center} 
\conclusionbox{ 
\textit{Conclusion \ding{172}: The \name series \textbf{optimize memory usage to a SGD-like level}, \textbf{enhances model quality over AdamW}, and \textbf{accelerates pre-training} by enabling larger batch sizes. 
This makes \name a highly practical and efficient solution for large-scale LLM pre-training.} 
} 
\end{center}

\subsection{Memory-Efficient Fine-tuning with \name}~\label{sec:ft}

\vspace{-4mm}
Pre-training large foundation models typically demands thousands of GPUs and months of training, making it feasible only for large organizations. 
As a result, fine-tuning these models has become a more practical approach among engineers and researchers. Here, we thoroughly evaluate the performance of \name in fine-tuning scenarios.

\vspace{-2mm}
\paragraph{Setup.} We employ three open-source pre-trained models in the fine-tuning experiments, including LLaMA-3.2-1B, LLaMA-3-8B~\cite{llama3modelcard}, Gemma-7B and Mistral-7B~\cite{jiang2023mistral}. The downstream tasks are divided into two categories: (i) Eight common-sense reasoning tasks: Winogrande, PIQA, SIQA, OpenBookQA, HellaSwag, BoolQ, and ARC-Easy and ARC-Challenge; (ii) MMLU~\cite{hendrycks2020measuring} tasks across various domains: STEM, Social Sciences, Humanities and others.

\vspace{-2mm}
\paragraph{Baseline.} We compare \name against several baselines used during the pre-training experiments, including Full-rank AdamW~\cite{loshchilov2017decoupled}, LoRA~\cite{hu2021lora}, GaLore~\cite{zhao2024galore}, and Fira~\cite{chen2024fira}. Details of these baselines are summarized in Section~\ref{sec:baselines}. Additionally, we include DoRA~\cite{liu2024dora}, an effective fine-tuning approach.
We set the rank to 32 and 8 for common-sense reasoning and MMLU tasks, respectively.
\namec uses a rank of 1.

\begin{table*}[htb]
\centering
\caption{Comparison of various finetuning approaches on common-sense reasoning tasks. Experiments are conducted with Llama-3.2-1B based on the implementation from \cite{liu2024dora}.}
\resizebox{0.8 \textwidth}{!}{\begin{tabular}{c|cccccccc|c}
\toprule
\small Methods  &\small  WG  & \small PIQA  & \small SIQA  & \small OBQA  &  \small HS   & \small BoolQ   & \small ARC-E   & \small ARC-C   & \small Average \\
\toprule
\small AdamW & \small 68.19 & \small 76.12  & \small 72.36  & \small 69.00 & \small  69.19  & \small  64.34 & \small 72.22  & \small 55.12 & \small 68.07 \\  \midrule 

\small LoRA & \small  67.56 & \small 63.28 & \small 71.65  & \small 68.20 & \small  19.13 & \small 63.58 & \small 67.30  & \small 52.99 & \small 59.21 \\
\small DoRA & \small 68.98 & \small 74.70 & \small 72.47 & \small 64.80  & \small 63.93 & \small 64.01 & \small 69.32 & \small 52.82 & \small 66.38 \\
\small GaLore & \small 62.75   & \small 72.63 & \small 68.17  & \small 62.20 & \small 47.81 & \small 58.99 & \small  68.94 & \small 47.61 & \small 61.14 \\
\small Fira & \small  71.82  & \small 77.20 & \small  73.08  & \small 69.00 & \small 68.21  & \small 64.31 & \small  73.40 & \small  54.78 & \small 68.98 \\ \midrule
\rowcolor{gray4}
\small{\name \textit{w.} \texttt{SVD}} & \small 70.88 & \small  77.69  & \small 72.52  & \small  70.60 & \small 68.19  & \small 63.00 & \small  74.03  & \small  55.72 & \small  \bf 69.08 \\ 
\rowcolor{gray4}
\small \name & \small 70.40  & \small 76.93 & \small 72.72 & \small  70.60 & \small 63.75  & \small 62.69 & \small 73.40  & \small 55.20 & \small   \bf 68.21 \\ 
\rowcolor{gray2}
\small \namec & \small 67.64 & \small 76.50 & \small 72.88 & \small 69.60 & \small 66.54 & \small 64.22 & \small 72.98  & \small 55.46 & \small   \bf 68.23 \\ 

\bottomrule 
\end{tabular}}
\label{tab:commonse_sense}
\vspace{-15pt}
\end{table*}

\begin{table}[htb]
    \caption{Comparison results of various memory-efficient fine-tuning algorithms on MMLU tasks. For Galore, Fira, \name, and \namec, we report the best accuracy obtained by sweeping the learning rate within the range [5e-6, 7.5e-6, 1e-5, 2.5e-5, 5e-5, 7.5e-5, 1e-4,1.5e-4, 2e-4].} 
    \label{tab:fine_tuning_mmlu}
    \centering 
    \resizebox{1\linewidth}{!}{
    \begin{tabular}{c|c|cccc|c} \toprule
    \small Model & \small Methods & \small STEM & \multicolumn{1}{c}{\begin{tabular}[c]{@{}c@{}}Social\\ Sciences\end{tabular}}
 & \small Humanities & \small Other & \small Average \\ \midrule
    \multirow{6}{*}{\small LLaMA-3-8B} 
    & \small Full  & \small 54.27 & \small 75.66 & \small 59.08 & \small 72.80 & \small 64.85  \\
    & \small LoRA & \small 53.00 & \small 74.85 & \small 58.97 & \small 72.34 & \small 64.25 \\
    & \small GaLore & \small 54.50  & \small 75.11 & \small 58.59 & \small 72.03 & \small 64.43 \\
    & \small Fira  & \small 53.53  & \small 75.46 & \small 58.59 & \small 72.09 & \small 64.32  \\\cmidrule{2-7}
    \rowcolor{gray4}
    & \small{\name \textit{w.} \texttt{SVD}} & \small 54.73  & \small 75.46 & \small 58.72 & \small 72.68 & \small \textbf{64.7}6  \\ 
    \rowcolor{gray4}
    & \small \name &  \small 54.37  & \small 75.86 & \small 58.18 & \small 71.69 & \small 64.35  \\
    \rowcolor{gray2}
    & \small \namec & \small 54.40 & \small 75.37 & \small 58.72 & \small 71.59 & \small 64.41 \\ 
    \midrule

    \multirow{6}{*}{\small Gemma-7B}  
    & \small Full  & \small 30.03  & \small 37.16  & \small 34.08  & \small 35.47 & \small 34.21  \\
    & \small LoRA   & \small 26.23 & \small 34.94 & \small 30.88  & \small 36.96  & \small 32.18 \\
    & \small GaLore  & \small 25.47  & \small 33.21 & \small 31.07 & \small 33.71 & \small 30.95 \\
    & \small Fira   & \small 29.03  & \small 35.27 & \small 32.40 & \small 36.52 & \small 33.26 \\\cmidrule{2-7}

    \rowcolor{gray4}
    & \small{\name \textit{w.} \texttt{SVD}}    & \small 29.20  & \small 40.42 & \small 32.40 & \small 38.94 & \small \textbf{34.98}  \\ 
    \rowcolor{gray4}
    & \small \name & \small 27.53  & \small 36.97 & \small 33.99 & \small 36.40 & \small 33.81 \\
    \rowcolor{gray2}
    & \small \namec  & \small 27.30  & \small 33.83 & \small 31.61 & \small 33.77 & \small 31.67 \\ 
    \midrule

    \multirow{6}{*}{\small Mistral-7B} 
    & \small Full & \small 52.40  & \small 72.95  & \small 55.16  & \small 69.05  & \small 61.67  \\
    & \small LoRA & \small 52.13 & \small 72.46 & \small 55.05  & \small 68.77 & \small 61.41 \\
    & \small GaLore & \small 51.87  & \small 72.82 & \small 54.94 & \small 69.49 & \small 61.56 \\
    & \small Fira & \small 52.80  & \small 72.85 & \small 55.07 & \small 69.11 & \small 61.72 \\\cmidrule{2-7}
    \rowcolor{gray4}
    & \small{\name \textit{w.} \texttt{SVD}}     & \small 52.43  & \small 73.28 & \small 55.05 & \small 69.24 & \small \textbf{61.76} \\
    \rowcolor{gray4}
    & \small \name & \small 51.63  & \small 73.12 & \small 54.90 & \small 69.58 & \small 61.58 \\
    \rowcolor{gray2}
    & \small \namec  & \small 51.97 & \small 72.89 & \small 54.43 & \small 69.18 & \small 61.35 \\ 
    \bottomrule
    \end{tabular}
}
\vspace{-15pt}
\end{table}

\vspace{-2mm}
\paragraph{Main Results.} As shown in Table~\ref{tab:commonse_sense} and Table~\ref{tab:fine_tuning_mmlu}, \name consistently matches or outperforms other baselines, achieving up to an $1.01$ average accuracy improvement over full-rank AdamW on common-sense reasoning tasks. Notably, compared to AdamW, \name requires only a rank-32 space for optimization states, while \namec uses a rank of 1, resulting in negligible memory costs for optimization states. Despite this, both approaches deliver a clear margin of accuracy improvement over AdamW on commonsense tasks while maintaining comparable performance on the MMLU tasks.

\begin{center} 
\conclusionbox{ 
\textit{Conclusion \ding{173}: The \name series establishes itself as a compelling \textbf{memory-efficient full-parameter} fine-tuning method, delivering \textbf{on-par or better performance compared to AdamW}.
}
}
\end{center}

\begin{table}[htb]
    \centering
    \caption{Optimizer step time (in seconds) across LLaMA-1B and LLaMA-7B with a sequence length of 1024 on a single A100 GPU. Results are averaged over 400 steps, with 100 warm-up steps and low-rank projection matrices updated every 200 steps. Batch sizes are set to 16 (1B) and 4 (7B), the maximum batch size supported by AdamW. Lower values indicate less overhead.}
    \label{tab:opt_step_time}
    \resizebox{0.98\linewidth}{!}{
    \begin{tabular}{c|c|c|c|c|c|c}
        \toprule
        \multirow{3}{*}{\small \begin{tabular}[c]{@{}c@{}}Model\\ Size\end{tabular}} & \multirow{3}{*}{\small \begin{tabular}[c]{@{}c@{}} Bwd\\ Time(s)\end{tabular}} & \multicolumn{5}{c}{\small Optimizer Step Time (s)} \\
        \cmidrule(lr){3-7}
        & & \small AdamW & \small \name & \small \begin{tabular}[c]{@{}c@{}} \name\\ \texttt{-Mini}\end{tabular} & \small GaLore & \small Fira \\
        \midrule
        1B  & 1.069 & \textbf{0.036} & 0.051 & 0.048 & 0.371 & 0.421 \\
        7B  & 0.712 & 0.173 & 0.159 & \textbf{0.142} & 2.874 & 3.086 \\
        \bottomrule
    \end{tabular}}
    \vspace{-15pt}
\end{table}
\begin{table*}[t]
    \centering
    \caption{Validation of pretraining perplexity of \name-series combined with int-8 weight quantization strategy~\cite{zhang2024q}. \name-series uses a quantization group size of 128.
    $^\text{\ding{171}}$ are collected from~\cite{zhao2024galore} and ~\cite{zhang2024q}.}
    \label{tab:q-pre-train}
    \resizebox{0.75 \linewidth}{!}{
    \begin{tabular}{c|cc|cc|cc}
    \toprule
    \rowcolor{white}
    \multirow{2}{*}{Methods} & \multicolumn{2}{c|}{60M} &  \multicolumn{2}{c|}{130M} &  \multicolumn{2}{c}{350M} \\
    &  \small Perplexity &  \small Memory &  \small Perplexity &  \small Memory &  \small Perplexity &  \small Memory \\ \midrule
    AdamW$^\text{\ding{171}}$ & \small 34.06 & \small 0.36G & \small 25.08 & \small 0.76G & \small 18.80 & \small 2.06G \\ \midrule
    GaLore$^\text{\ding{171}}$ & \small 34.88 & \small 0.24G & \small 25.36 & \small 0.52G & \small 18.95 & \small 1.22G \\ 
    Q-GaLore$^\text{\ding{171}}$ & \small 34.88 & \small 0.18G & \small 25.53 & \small 0.39G & \small 19.79 & \small 0.88G  \\ 
    \rowcolor{gray4}
    \name & \small 31.55 & \small 0.24G & \small 22.94 & \small 0.52G & \small 16.85 & \small 1.22G \\
    \rowcolor{gray4}
    Q-\name & \small 31.97 & \small 0.18G & \small 24.16 & \small 0.39G & \small 18.79 & \small 0.88G \\
    \rowcolor{gray2}
    \namec  & \small 31.93 & \small 0.12G & \small 23.84 & \small 0.25G & \small 17.18 & \small 0.69G  \\
    \rowcolor{gray2}
    Q-\namec  & \small 33.05 & \small 0.06G & \small 24.70 & \small 0.12G & \small 18.90 & \small 0.35G  \\
    \bottomrule
    \end{tabular}
}
\vspace{-10pt}
\end{table*}

\subsection{End-to-End System-level Benefits} ~\label{sec:memory}

\vspace{-4mm}
We further validate the system-level benefits, end-to-end throughput, and memory overhead by running LLaMA-7B on 8$\times$ A100-80GB GPUs, comparing \name and \namec against AdamW in Fig.~\ref{fig:teasor}.

\vspace{-2mm}
\textbf{Higher throughput:} \name achieves significantly higher throughput than AdamW, particularly under limited hardware resources, as AdamW struggles with large batch sizes due to memory constraints. With \name, we can scale the batch size up to 4$\times$ that of AdamW, resulting in up to a 3$\times$ improvement in throughput. Additionally, the \name series avoids the extra computational overhead associated with SVD that are known to be expensive.

\vspace{-2mm}
\textbf{Much lower memory usage:} With a batch size of 4, AdamW already reaches the memory limit with a memory usage of $\sim$79 GB, while \name and \namec require only $\sim$70 GB and $\sim$68 GB, respectively, for a batch size of 16.
This study highlights that using AdamW not only results in high memory costs but also reduces training efficiency, as models become memory-bound, preventing full utilization of computational resources. By using \name, we can effectively increase batch size, accelerating large-scale training with even better performance.

\vspace{-4mm}
\paragraph{Negligible Optimizer Step Overhead.}
Beyond end-to-end throughput, we evaluate optimizer step time to demonstrate the minimal overhead of the \name series, which relies only on \textit{lightweight random projections}, in Table ~\ref{tab:opt_step_time}. 
We benchmark LLaMA-1B and 7B with a sequence length of 1024, using batch sizes of 16 and 4, respectively—the largest batch sizes that AdamW supports.

Unlike GaLore and Fira, which suffer from expensive SVD updates (taking approximately 10 minutes for the 7B model),  
\textbf{the \name series incurs minimal overhead} by relying only on cheap random projections.  
While \name introduces an additional step to compute the scaling factor and applies NL,  
\textit{projecting gradients into a low-rank space reduces the overhead of maintaining first-order and second-order moments.}
Notably, on the 7B model, the \name series achieves even faster optimizer step times than AdamW,  
demonstrating its efficiency in both compute and memory usage.
Overall, the optimizer step overhead remains negligible compared to the backward pass, particularly for larger batch sizes, ensuring that \name enables superior memory efficiency while being a lightweight solution.

\vspace{-2mm}
\paragraph{\textbf{\namec unlocks pre-training LLaMA-13B on A100 80GB without system-Level optimization:}}
Leveraging the exceptional memory efficiency of \namec, we are the first to enable the pre-training of a LLaMA-13B model A100 80GB GPU with naive DDP, without requiring other system-level optimizations, such as model sharding.
This breakthrough not only simplifies deployment by reducing engineering complexity but also empowers researchers to scale up model sizes effortlessly with \namec.

\vspace{-2mm}
\paragraph{\textbf{Combination with weight quantization unlocks pre-training LLaMA-7B under 12 GB:}}  
With our methods significantly reducing optimizer memory costs, the memory consumed by model weights becomes the next dominant factor. To further address this challenge, we integrate our approach with the Int8 weight quantization method proposed in Q-GaLore~\cite{zhang2024q}, enabling even greater memory savings. The results, detailed in Table~\ref{tab:q-pre-train}, demonstrate that our Q-\name series effectively minimizes memory usage for both optimizer and weight components while maintaining on-par or better pre-training perplexity compared to full-precision AdamW. Moreover, Q-\name achieves a clear performance margin over Q-GaLore, underscoring its superiority in both memory efficiency and model quality.

By successfully combining \name-series with quantization, we enable—\textbf{for the first time}—the pre-training of a LLaMA-7B model using just 12 GB of memory (Q-\namec), assuming a layer-wise gradient update strategy~\cite{lv2023full} is employed. This represents a significant breakthrough in making LLM pre-training feasible on low-end GPUs, eliminating the dependency on high-end hardware traditionally required for LLM training. 
Our approach democratizes access to LLM pre-training, making it accessible to a broader range of researchers and organizations with limited resources.

\begin{center} 
\conclusionbox{ 
\textit{Conclusion \ding{174}: 
The \name series \textbf{significantly reduces optimizer memory usage} with \textbf{minimal compute overhead}, enabling \textbf{higher throughput}, \textbf{improved model scalability}, and \textbf{more friendly low-end GPU training} for LLMs.  
\name takes a pivotal step toward \textbf{democratizing LLM training}, making large-scale model training more accessible and efficient.
}
} 
\end{center}

\begin{figure*}[!htb]
    \centering
    \includegraphics[width=1 \linewidth]{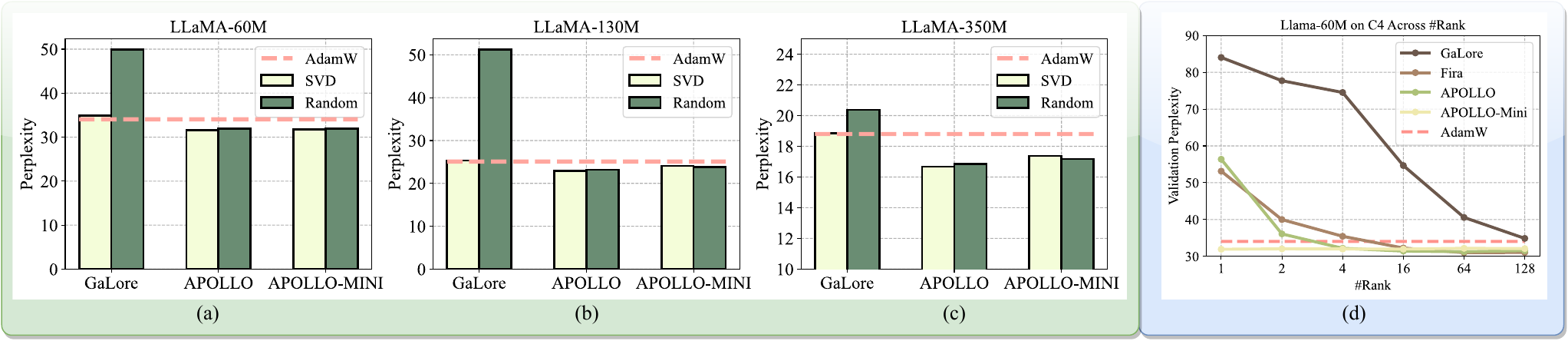}
    \vspace{-4mm}
    \caption{(a-c) Comparison results of various optimization methods using singular value decomposition or random projection. The experiments were conducted on LLaMA-60M/130M/350M models for C4 pretraining tasks. (d) Validation perplexity with varying rank sizes, where 128 is one-quarter of the original model dimension. The red dashed line indicates the performance of full-rank AdamW.}
    \label{fig:svd_rp}
    \vspace{-10pt}
\end{figure*}

\subsection{Extra Investigation and Ablation Study}~\label{sec:ablation}

\vspace{-2mm}
This section presents multiple experimental investigations, focusing on four key research questions: \textit{Q1:} How can the scaling factor subspace be identified? \textit{Q2:} Is \name sensitive to the number of rank? \textit{Q3:} What is an appropriate granularity for scaling factors? \textit{Q4:} How do detailed comparisons evolve throughout the training process?
\textit{Q5:} How \name performs in long-context training setting?

\vspace{-2mm}
\paragraph{\textit{A1}: Similar performance between Random Projection (RP) and Singular Value Decomposition (SVD).} Previous low-rank gradient-based approaches~\cite{zhao2024galore} rely on SVD to identify the gradient subspace, frequently updated during training. This process is time-consuming, thereby affecting training throughput. For a LLaMA-7B model, each SVD operation takes approximately ten minutes, resulting in an additional 25 hours of training time over 150K steps when the subspace is updated every 1,000 steps. 
To alleviate this overhead, \cite{zhang2024q} employs a lazy subspace updating strategy, though it still incurs substantial SVD costs. 
In this section, we demonstrate that \name performs effectively with random projection, significantly reducing the heavy SVD costs present in previous memory-efficient training algorithms. We pre-train LLaMA models ranging from 60M to 350M on the C4 dataset using GaLore, \name, and \namec, reporting results for both SVD and random projection in each method. As shown in Figure~\ref{fig:svd_rp} (a-c), GaLore is significantly impacted by random projection, failing to match the performance of AdamW (red dashed line). In contrast, both \name and \namec demonstrate strong robustness with random projection, even slightly outperforming SVD in certain cases, such as \namec on LLaMA-350M. These results confirm the effectiveness of \name under random projection, thereby addressing the throughput challenges present in previous low-rank gradient methods.

\vspace{-2mm}
\paragraph{\textit{A2}: \namec remains effective even with a rank of 1.} We carry out an ablation study on pre-training LLaMA-60M with different rank budgets, as shown in Figure~\ref{fig:svd_rp} (d). The results demonstrate that GaLore’s performance degrades significantly as the rank decreases, matching full-rank AdamW only when the rank is set to 128 (one-quarter of the original dimension), which limits its effectiveness in extreme low-rank scenarios. In contrast, \name exhibits much better robustness with smaller rank settings compared to both GaLore and Fira, achieving performance comparable to full-rank AdamW even with lower ranks.

Interestingly, \namec shows the best rank efficiency, remaining effective even with a rank of 1, clearly outperforming AdamW. 
By averaging the gradient scaling factor across different channels, \namec seems to effectively mitigate the errors introduced by low-rank projection. This capability allows \namec to achieve SGD level memory cost while reaching superior performance than AdamW.

\begin{table}[!htb]
    \centering
    \caption{Ablation study on the granularity of gradient scaling factors. Perplexity on the validation set is reported.}
    \label{tab:granularity}
    \resizebox{0.9\linewidth}{!}{
    \begin{tabular}{cc|ccc}
    \toprule
    \rowcolor{white}
    \small Methods & \small Granularity & \small 60M & \small 130M & \small 350M  \\ \midrule
    \multicolumn{2}{c|}{AdamW} & \small 34.06 & \small 25.08 & \small 18.80  \\ 
    \multicolumn{2}{c|}{GaLore} & \small 34.88 & \small 25.36 & \small 18.95  \\ \midrule
    \multirow{2}{*}{\name \textit{w.} \texttt{SVD}} & \small \textit{Channel} & \small 31.26 & \small 22.84 & \small 16.67  \\ 
    & \small \textit{Tensor} & \small 31.77 & \small 23.86 & \small 16.90 \\ \midrule
    \multirow{2}{*}{\name} & \small \textit{Channel} & \small 31.55 & \small 22.94 & \small 16.85  \\ 
    & \small \textit{Tensor} & \small 32.10 & \small 23.82 & \small 17.00 \\ 
    \bottomrule
    \end{tabular}}
    \vspace{-5pt}
\end{table}

\vspace{-2mm}
\paragraph{\textit{A3}: The gradient scaling factor can even be calculated at the tensor level.} In Table~\ref{tab:granularity}, we compare the pre-training perplexity of our method using different scaling factor granularities. Here, \textit{Channel} indicates that the gradient scaling factor is calculated along the channels with the smaller dimension of each layer, while \textit{Tensor} denotes that a single gradient scaling factor is used for each layer. We keep one-quarter of the original model dimension as the rank. Across model sizes ranging from 60M to 350M, the perplexity difference between these granularities is minimal and both configurations outperform AdamW and GaLore. These results demonstrate that using a tensor-wise scaling factor is sufficient for modest rank training (one-quarter of the original dimension). However, in extreme low-rank scenarios, tensor-wise scaling factor (\namec) outperforms channel-wise ones (\name), as shown in Figure~\ref{fig:svd_rp} (d).

\vspace{-2mm}
\paragraph{\textit{A4}: \name performs better with larger model sizes and more training tokens.} Figure~\ref{fig:train_curve} illustrates the validation perplexity across the training process for LLaMA-350M models. In the early training stages, Fira shows faster convergence and lower perplexity. However, \name gradually catches up, achieving improved performance in the later stages. This observation suggests that AdamW optimization states play a more crucial role in the initial phase (as Fira maintains the low-rank format of these states), while compressing the optimization states into gradient scaling factors (as done in \name) becomes more effective in later stages. Additionally, Figure~\ref{fig:train_curve} indicates that \name seem to benefit from increased training tokens. To quantify this effect, we pre-trained LLaMA-130M models for \{20k, 30k\} steps, with final perplexities for Fira and \name reaching \{22.73, 21.69\} and \{22.84, 21.71\}, respectively, further confirming that \name can gradually catch up Fira with more training tokens. Furthermore, Table~\ref{tab:pre-train} shows that as model size increases, \name demonstrates better scaling capabilities than Fira: validation perplexity decreases from 31.55 to 14.20 when scaling model sizes from 60M to 1B, whereas Fira only improves from 31.06 to 14.31. Overall, \name exhibits superior performance with both larger model sizes and additional training tokens.

\begin{figure}
    \centering
    \includegraphics[width=1 \linewidth]{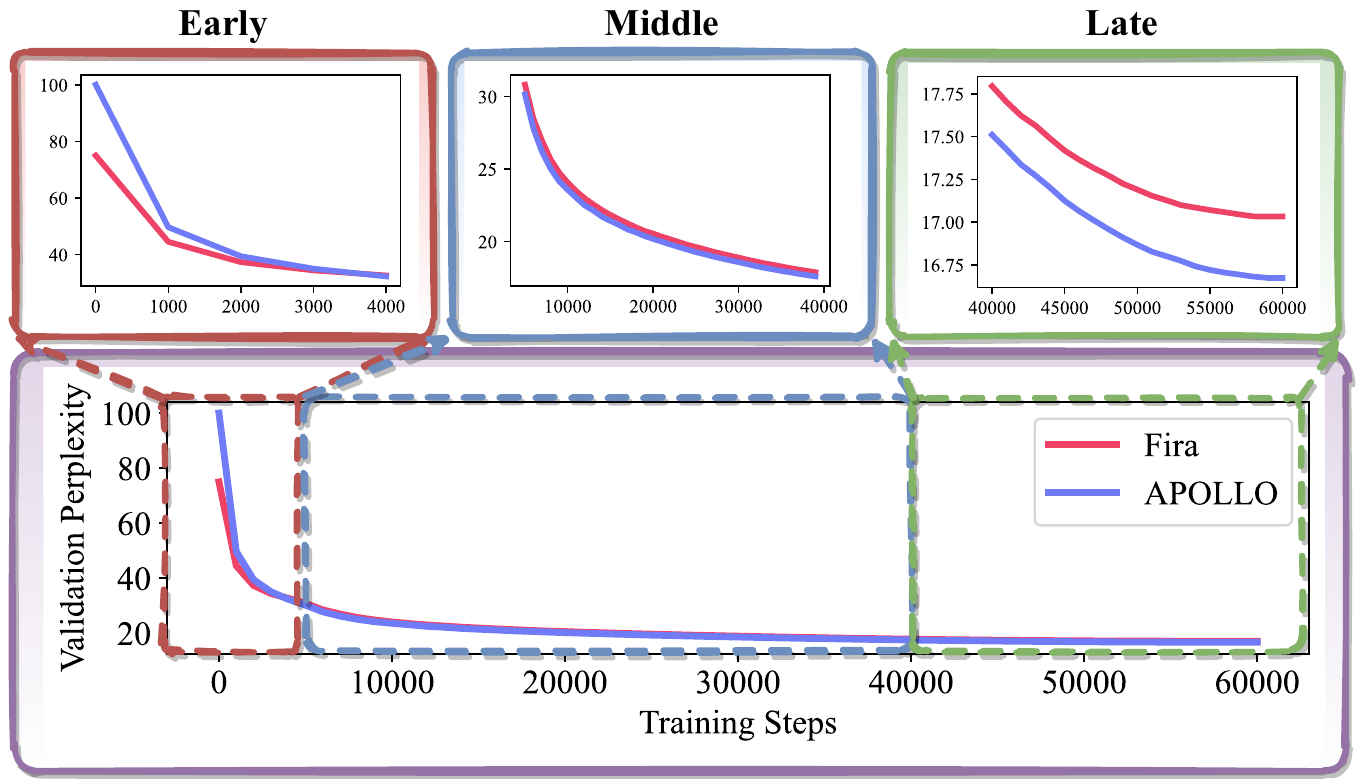}
    \vspace{-4mm}
    \caption{Validation perplexity of pretraining LLaMA-350M on the C4 dataset, with zoomed-in figures showing early, middle, and late stages of training at top, with full training period at bottom.}
    \label{fig:train_curve}
\end{figure}

\vspace{-2mm}
\paragraph{\textit{A5}: \name performs on par with or even better than AdamW in the long-context setting.}
Training LLM with long context windows is computationally expensive, but it is critical to enhance LLM performance by involving more contexts to reason.
Here, we further validate the effectiveness of the \name series on pre-training a LLaMA-350M with a long context window of 1024, four times over original GaLore uses.
To establish a strong baseline, we vary AdamW’s learning rate across a range of [1e-3, 2.5e-3, 5e-3, 7.5e-3, 1e-2]. 
We also lazily tune the scale factor of \name-series by varying \name's in [$\sqrt{1}$, $\sqrt{2}$, $\sqrt{3}$] and \namec's in [$\sqrt{128}, \sqrt{256}, \sqrt{384}$], under a fixed learning rate 1e-2.

As shown in Fig.~\ref{fig:apollo-long-context}, both \name and \namec demonstrate better performance than AdamW while achieving drastic reductions in optimizer memory costs—1/8 or even 1/1024 of AdamW’s memory usage. 
Note that our methods generally exhibit even better performance in the later stage with more training tokens involved, marking it a promising candidate in partial LLM pre-training settings, i.e., long-context window and trillions of training tokens.

\begin{figure}[h]
\centering
\includegraphics[width=0.8 \linewidth]{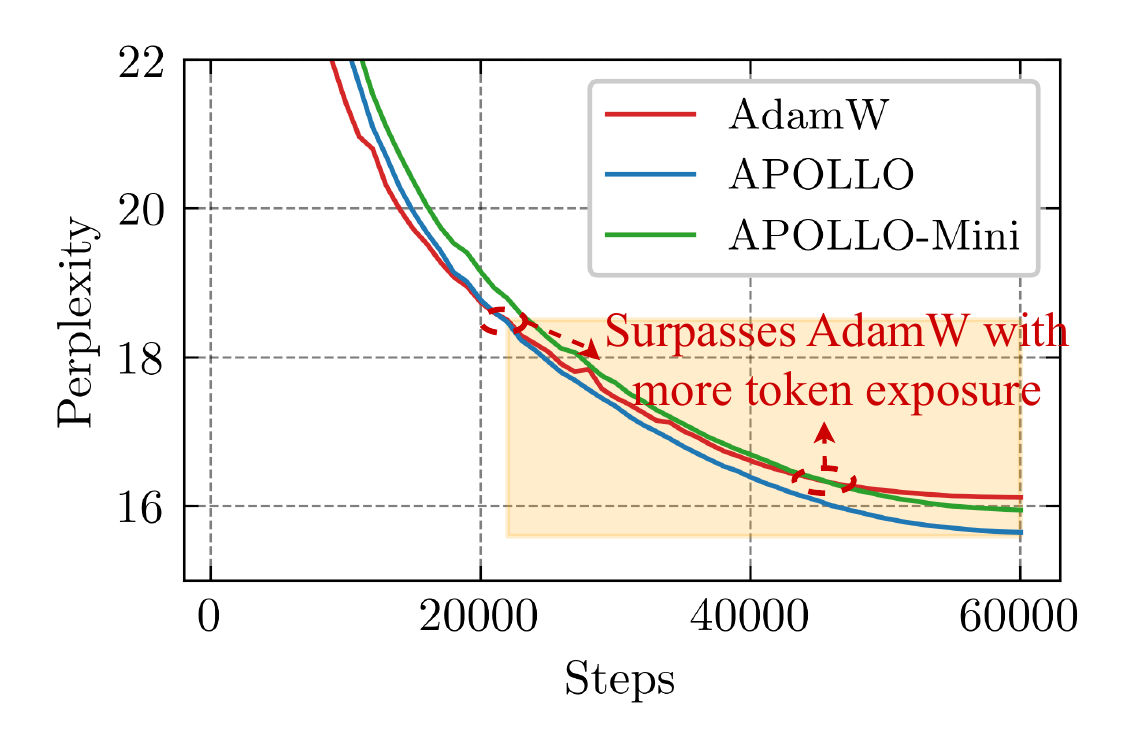}
\vspace{-4mm}
\caption{Perplexity curves of the LLaMA-350M model trained in a long-context window setting. \name and \namec outperform AdamW with a grid-searched learning rate, demonstrating the effectiveness of the \name series in industrial LLM pre-training settings(long sequences and extensive training tokens).}
\label{fig:apollo-long-context}
\vspace{-10pt}
\end{figure}

\subsection{Extra Insights on Why a Stateless Optimizer Can Beat AdamW in Pre-training}
\label{sec:insight}
We provide preliminary insights on why a stateless \name can surpass AdamW in certain scenarios, and we leave a formal one as the future work.

Adam(W) applies 
\(
\tilde{\mathbf{G}}_t = \frac{\mathbf{M}_t}{\sqrt{\mathbf{V}_t} + \epsilon},
\)
which can be viewed as scaling the raw gradient \(\mathbf{G}_t\) by a scaling matrix \(\mathbf{S} = \frac{\tilde{\mathbf{G}}_t}{\mathbf{G}_t}\).
\name observes that this fine-grained, parameter-wise scaling $\mathbf{S}$ can be approximated at the channel or tensor level, validated in Fig.~\ref{fig: loss-motivation}.
Although coarser, this scaling largely preserves the original gradient direction, (e.g., \(\mathbf{G}_t s^R\) in \namec) and thus behaves similarly to SGD. Such an “SGD-like” update depends more on the current gradient and injects greater randomness during training, \textbf{enhancing the ability to escape local optima} and yielding \textbf{better generalization performance}\cite{zhou2020towards, keskar2017improving}. This explains why \name series can surpass AdamW, especially at later training stages (when generalization becomes critical) and for larger models (with more complex loss landscapes).
Key observations supporting this claim include:
\begin{itemize}
    \item In Sec.~\ref{sec:reform_adam}, Fig.~\ref{fig: loss-motivation}, the structured AdamW (\name-style update rule) underperforms AdamW initially but eventually surpasses it.
    \item In Sec.~\ref{sec:ablation} (A4), \name typically outperforms AdamW at later stages of training.
\end{itemize}

\noindent \textbf{Why \name Resembles SGD Yet Performs Well for LLM Training?}
While SGD is associated with stronger generalization, it often struggles with Transformer training~\cite{pan2023toward,zhang2024transformers}. \name reconciles SGD’s generalization benefits with AdamW’s convergence speed, as illustrated by the following two hypotheses~\cite{pan2023toward,zhang2024transformers}.

\begin{table}[htb]
    \centering
    \caption{Directional sharpness comparison across different optimizers.}
    \label{tab:sharpness}
    \resizebox{0.98\linewidth}{!}{
    \begin{tabular}{c|c|c|c|c}
        \toprule
        \small Epoch & \small SGD & \small Adam & \small \name & \small \namec \\
        \midrule
        2 & 1.959722 & 0.009242 & 0.006024 & 0.004017 \\
        5 & 1.512521 & 0.000509  & 0.000249 & 0.000107 \\
        10 & 2.471792 & 0.000242 & 0.000163  & 0.000056 \\
        20 & 3.207535 & 0.000399 & 0.000261 & 0.000101 \\
        \bottomrule
    \end{tabular}}
\end{table}

\paragraph{Hypothesis 1: Directional Sharpness~\cite{pan2023toward}}

A key finding in~\cite{pan2023toward} is that Adam achieves lower \textit{directional sharpness} than SGD, thereby improving Transformer training. The directional sharpness of \(f\) at \(x\) along direction \(v\) (with \(\|v\|_2=1\)) is \(v^\top \nabla^2 f(x) v\).  
Lower directional sharpness implies the possibility of taking larger effective steps, potentially yielding a greater local decrease in the objective. 
In contrast, if the directional sharpness is large, we have no choice but to take a tiny step, as otherwise the loss would blow up due to the second-order term.

Empirical tests on \name/\namec (using a small T5 model for a machine translation task following ~\cite{pan2023toward}) show significantly reduced sharpness relative to SGD and comparable to or better sharpness than Adam(W) (see Table~\ref{tab:sharpness}). This provides a theoretical underpinning for \name’s effectiveness in LLM training.

\paragraph{Hypothesis 2: Adaptive Learning Rates for Transformers~\cite{zhang2024transformers}}
Transformer blocks display varying Hessian spectra, suggesting that block-wise adaptive learning rates are advantageous~\cite{zhang2024transformers}, which can render naive SGD less suitable. However, fully parameter-wise adaptive learning rates (as in AdamW) can be redundant, as shown in Adam-Mini~\cite{zhang2024transformers}, which replaces the second-order moment with group-wise averaging—thereby reducing optimizer memory usage by up to 50\%.

\name applies adaptive learning rates in a structured channel-/tensor-wise manner and goes beyond Adam-Mini by reducing memory usage for both first- and second-order moments, even eliminating optimizer memory in \namec.

\section{Conclusion}

In this paper, we introduced a novel approach for training large language models (LLMs) that strikes an effective balance between memory efficiency and performance. Motivated by the limitations of existing methods like GaLore, which rely on memory-intensive SVD, and inspired by techniques like Fira and Adam-mini, we proposed two methods to achieve structured-wise gradient scaling. Our approach leverages low-rank optimizer states, using random projection only to preserve gradient norms, enabling efficient training without storing the full optimizer state.
Extensive experiments across both pre-training and fine-tuning tasks demonstrate the effectiveness of our \name, consistently surpassing the AdamW baseline with greater memory saving than GaLore.
\namec further squeeze the memory cost by using a rank-1 sub-space, achieving better performance than AdamW at the cost of SGD.
Overall, our method offers a promising solution to the memory bottlenecks in LLM training, providing an efficient alternative that maintains high performance while drastically reducing memory consumption.
\section*{Acknowledgements}

We extend our heartfelt gratitude to the following individuals for their invaluable contributions:

\begin{itemize}
    \item Yuandong Tian, for critical discussions and invaluable insights regarding \name’s optimization behaviors, which greatly informed our preliminary theoretical insights.
\end{itemize}

\nocite{langley00}

\bibliography{ref/ref}

\begin{thebibliography}{51}
\providecommand{\natexlab}[1]{#1}
\providecommand{\url}[1]{\texttt{#1}}
\expandafter\ifx\csname urlstyle\endcsname\relax
  \providecommand{\doi}[1]{doi: #1}\else
  \providecommand{\doi}{doi: \begingroup \urlstyle{rm}\Url}\fi

\bibitem[AI@Meta(2024)]{llama3modelcard}
AI@Meta.
\newblock Llama 3 model card.
\newblock 2024.
\newblock URL \url{https://github.com/meta-llama/llama3/blob/main/MODEL_CARD.md}.

\bibitem[Amini et~al.(2019)Amini, Gabriel, Lin, Koncel-Kedziorski, Choi, and Hajishirzi]{amini2019mathqa}
Amini, A., Gabriel, S., Lin, P., Koncel-Kedziorski, R., Choi, Y., and Hajishirzi, H.
\newblock Mathqa: Towards interpretable math word problem solving with operation-based formalisms, 2019.

\bibitem[Biderman et~al.(2024)Biderman, Portes, Ortiz, Paul, Greengard, Jennings, King, Havens, Chiley, Frankle, et~al.]{biderman2024lora}
Biderman, D., Portes, J., Ortiz, J. J.~G., Paul, M., Greengard, P., Jennings, C., King, D., Havens, S., Chiley, V., Frankle, J., et~al.
\newblock Lora learns less and forgets less.
\newblock \emph{arXiv preprint arXiv:2405.09673}, 2024.

\bibitem[Bisk et~al.(2020)Bisk, Zellers, Gao, Choi, et~al.]{bisk2020piqa}
Bisk, Y., Zellers, R., Gao, J., Choi, Y., et~al.
\newblock Piqa: Reasoning about physical commonsense in natural language.
\newblock In \emph{Proceedings of the AAAI conference on artificial intelligence}, volume~34, pp.\  7432--7439, 2020.

\bibitem[Brown et~al.(2020)Brown, Mann, Ryder, Subbiah, Kaplan, Dhariwal, Neelakantan, Shyam, Sastry, Askell, et~al.]{brown2020language}
Brown, T., Mann, B., Ryder, N., Subbiah, M., Kaplan, J.~D., Dhariwal, P., Neelakantan, A., Shyam, P., Sastry, G., Askell, A., et~al.
\newblock Language models are few-shot learners.
\newblock \emph{Advances in neural information processing systems}, 33:\penalty0 1877--1901, 2020.

\bibitem[Chen et~al.(2016)Chen, Xu, Zhang, and Guestrin]{chen2016training}
Chen, T., Xu, B., Zhang, C., and Guestrin, C.
\newblock Training deep nets with sublinear memory cost.
\newblock \emph{arXiv preprint arXiv:1604.06174}, 2016.

\bibitem[Chen et~al.(2024)Chen, Feng, Li, Lai, Yue, Yuan, and Wang]{chen2024fira}
Chen, X., Feng, K., Li, C., Lai, X., Yue, X., Yuan, Y., and Wang, G.
\newblock Fira: Can we achieve full-rank training of llms under low-rank constraint?
\newblock \emph{arXiv preprint arXiv:2410.01623}, 2024.

\bibitem[Clark et~al.(2019)Clark, Lee, Chang, Kwiatkowski, Collins, and Toutanova]{clark2019boolq}
Clark, C., Lee, K., Chang, M.-W., Kwiatkowski, T., Collins, M., and Toutanova, K.
\newblock Boolq: Exploring the surprising difficulty of natural yes/no questions.
\newblock \emph{arXiv preprint arXiv:1905.10044}, 2019.

\bibitem[Clark et~al.(2018)Clark, Cowhey, Etzioni, Khot, Sabharwal, Schoenick, and Tafjord]{clark2018think}
Clark, P., Cowhey, I., Etzioni, O., Khot, T., Sabharwal, A., Schoenick, C., and Tafjord, O.
\newblock Think you have solved question answering? try arc, the ai2 reasoning challenge.
\newblock \emph{arXiv preprint arXiv:1803.05457}, 2018.

\bibitem[Dettmers et~al.(2024)Dettmers, Pagnoni, Holtzman, and Zettlemoyer]{dettmers2024qlora}
Dettmers, T., Pagnoni, A., Holtzman, A., and Zettlemoyer, L.
\newblock Qlora: Efficient finetuning of quantized llms.
\newblock \emph{Advances in Neural Information Processing Systems}, 36, 2024.

\bibitem[Dubey et~al.(2024)Dubey, Jauhri, Pandey, Kadian, Al-Dahle, Letman, Mathur, Schelten, Yang, Fan, et~al.]{dubey2024llama}
Dubey, A., Jauhri, A., Pandey, A., Kadian, A., Al-Dahle, A., Letman, A., Mathur, A., Schelten, A., Yang, A., Fan, A., et~al.
\newblock The llama 3 herd of models.
\newblock \emph{arXiv preprint arXiv:2407.21783}, 2024.

\bibitem[Freksen(2021)]{freksen2021introduction}
Freksen, C.~B.
\newblock An introduction to johnson-lindenstrauss transforms.
\newblock \emph{arXiv preprint arXiv:2103.00564}, 2021.

\bibitem[Hao et~al.(2024)Hao, Cao, and Mou]{hao2024flora}
Hao, Y., Cao, Y., and Mou, L.
\newblock Flora: Low-rank adapters are secretly gradient compressors.
\newblock In \emph{Forty-first International Conference on Machine Learning}, 2024.
\newblock URL \url{https://openreview.net/forum?id=uubBZKM99Y}.

\bibitem[He et~al.(2024)He, Li, Hu, Chen, and Yuan]{he2024subspace}
He, Y., Li, P., Hu, Y., Chen, C., and Yuan, K.
\newblock Subspace optimization for large language models with convergence guarantees.
\newblock \emph{arXiv preprint arXiv:2410.11289}, 2024.

\bibitem[Hendrycks et~al.(2020)Hendrycks, Burns, Basart, Zou, Mazeika, Song, and Steinhardt]{hendrycks2020measuring}
Hendrycks, D., Burns, C., Basart, S., Zou, A., Mazeika, M., Song, D., and Steinhardt, J.
\newblock Measuring massive multitask language understanding.
\newblock \emph{arXiv preprint arXiv:2009.03300}, 2020.

\bibitem[Hu et~al.(2021)Hu, Shen, Wallis, Allen-Zhu, Li, Wang, Wang, and Chen]{hu2021lora}
Hu, E.~J., Shen, Y., Wallis, P., Allen-Zhu, Z., Li, Y., Wang, S., Wang, L., and Chen, W.
\newblock Lora: Low-rank adaptation of large language models.
\newblock \emph{arXiv preprint arXiv:2106.09685}, 2021.

\bibitem[Huang et~al.()Huang, Zhang, Zhang, Luo, Sun, and Wang]{huang2024galore}
Huang, W., Zhang, Z., Zhang, Y., Luo, Z.-Q., Sun, R., and Wang, Z.
\newblock Galore-mini: Low rank gradient learning with fewer learning rates.
\newblock In \emph{NeurIPS 2024 Workshop on Fine-Tuning in Modern Machine Learning: Principles and Scalability}.

\bibitem[Jaiswal et~al.(2023)Jaiswal, Gan, Du, Zhang, Wang, and Yang]{jaiswal2023compressing}
Jaiswal, A., Gan, Z., Du, X., Zhang, B., Wang, Z., and Yang, Y.
\newblock Compressing llms: The truth is rarely pure and never simple.
\newblock \emph{arXiv preprint arXiv:2310.01382}, 2023.

\bibitem[Jiang et~al.(2023)Jiang, Sablayrolles, Mensch, Bamford, Chaplot, Casas, Bressand, Lengyel, Lample, Saulnier, et~al.]{jiang2023mistral}
Jiang, A.~Q., Sablayrolles, A., Mensch, A., Bamford, C., Chaplot, D.~S., Casas, D. d.~l., Bressand, F., Lengyel, G., Lample, G., Saulnier, L., et~al.
\newblock Mistral 7b.
\newblock \emph{arXiv preprint arXiv:2310.06825}, 2023.

\bibitem[Jiang et~al.(2024)Jiang, Lin, Zhong, Huang, Chen, Zhang, Peng, Li, Xie, Nong, et~al.]{jiang2024megascale}
Jiang, Z., Lin, H., Zhong, Y., Huang, Q., Chen, Y., Zhang, Z., Peng, Y., Li, X., Xie, C., Nong, S., et~al.
\newblock $\{$MegaScale$\}$: Scaling large language model training to more than 10,000 $\{$GPUs$\}$.
\newblock In \emph{21st USENIX Symposium on Networked Systems Design and Implementation (NSDI 24)}, pp.\  745--760, 2024.

\bibitem[Johannes~Welbl(2017)]{SciQ}
Johannes~Welbl, Nelson F.~Liu, M.~G.
\newblock Crowdsourcing multiple choice science questions.
\newblock 2017.

\bibitem[Keskar \& Socher(2017)Keskar and Socher]{keskar2017improving}
Keskar, N.~S. and Socher, R.
\newblock Improving generalization performance by switching from adam to sgd.
\newblock \emph{arXiv preprint arXiv:1712.07628}, 2017.

\bibitem[Kingma \& Ba(2014)Kingma and Ba]{kingma2014adam}
Kingma, D.~P. and Ba, J.
\newblock Adam: A method for stochastic optimization.
\newblock \emph{arXiv preprint arXiv:1412.6980}, 2014.

\bibitem[Koco{\'n} et~al.(2023)Koco{\'n}, Cichecki, Kaszyca, Kochanek, Szyd{\l}o, Baran, Bielaniewicz, Gruza, Janz, Kanclerz, et~al.]{kocon2023chatgpt}
Koco{\'n}, J., Cichecki, I., Kaszyca, O., Kochanek, M., Szyd{\l}o, D., Baran, J., Bielaniewicz, J., Gruza, M., Janz, A., Kanclerz, K., et~al.
\newblock Chatgpt: Jack of all trades, master of none.
\newblock \emph{Information Fusion}, 99:\penalty0 101861, 2023.

\bibitem[Lialin et~al.(2023)Lialin, Muckatira, Shivagunde, and Rumshisky]{lialin2023relora}
Lialin, V., Muckatira, S., Shivagunde, N., and Rumshisky, A.
\newblock Relora: High-rank training through low-rank updates.
\newblock In \emph{Workshop on Advancing Neural Network Training: Computational Efficiency, Scalability, and Resource Optimization (WANT@ NeurIPS 2023)}, 2023.

\bibitem[Liang et~al.(2024)Liang, Liu, Chen, and Liu]{liang2024memory}
Liang, K., Liu, B., Chen, L., and Liu, Q.
\newblock Memory-efficient llm training with online subspace descent.
\newblock \emph{arXiv preprint arXiv:2408.12857}, 2024.

\bibitem[Liu et~al.(2022)Liu, Chen, Chen, Chen, Xiao, Wu, K{\"a}rkk{\"a}inen, Pechenizkiy, Mocanu, and Wang]{liu2022more}
Liu, S., Chen, T., Chen, X., Chen, X., Xiao, Q., Wu, B., K{\"a}rkk{\"a}inen, T., Pechenizkiy, M., Mocanu, D., and Wang, Z.
\newblock More convnets in the 2020s: Scaling up kernels beyond 51x51 using sparsity.
\newblock \emph{arXiv preprint arXiv:2207.03620}, 2022.

\bibitem[Liu et~al.(2024{\natexlab{a}})Liu, Wang, Yin, Molchanov, Wang, Cheng, and Chen]{liu2024dora}
Liu, S., Wang, C.-Y., Yin, H., Molchanov, P., Wang, Y.-C.~F., Cheng, K.-T., and Chen, M.-H.
\newblock Do{RA}: Weight-decomposed low-rank adaptation.
\newblock In \emph{Forty-first International Conference on Machine Learning}, 2024{\natexlab{a}}.
\newblock URL \url{https://openreview.net/forum?id=3d5CIRG1n2}.

\bibitem[Liu et~al.(2024{\natexlab{b}})Liu, Zhao, Iandola, Lai, Tian, Fedorov, Xiong, Chang, Shi, Krishnamoorthi, et~al.]{liu2024mobilellm}
Liu, Z., Zhao, C., Iandola, F., Lai, C., Tian, Y., Fedorov, I., Xiong, Y., Chang, E., Shi, Y., Krishnamoorthi, R., et~al.
\newblock Mobilellm: Optimizing sub-billion parameter language models for on-device use cases.
\newblock \emph{arXiv preprint arXiv:2402.14905}, 2024{\natexlab{b}}.

\bibitem[Loeschcke et~al.()Loeschcke, Toftrup, Kastoryano, Belongie, and Sn{\ae}bjarnarson]{loeschckeloqt}
Loeschcke, S.~B., Toftrup, M., Kastoryano, M., Belongie, S., and Sn{\ae}bjarnarson, V.
\newblock Loqt: Low-rank adapters for quantized pretraining.
\newblock In \emph{The Thirty-eighth Annual Conference on Neural Information Processing Systems}.

\bibitem[Loshchilov(2017)]{loshchilov2017decoupled}
Loshchilov, I.
\newblock Decoupled weight decay regularization.
\newblock \emph{arXiv preprint arXiv:1711.05101}, 2017.

\bibitem[Loshchilov \& Hutter(2019)Loshchilov and Hutter]{loshchilovdecoupled}
Loshchilov, I. and Hutter, F.
\newblock Decoupled weight decay regularization.
\newblock In \emph{International Conference on Learning Representations}, 2019.

\bibitem[Lv et~al.(2023)Lv, Yang, Liu, Gao, Guo, and Qiu]{lv2023full}
Lv, K., Yang, Y., Liu, T., Gao, Q., Guo, Q., and Qiu, X.
\newblock Full parameter fine-tuning for large language models with limited resources.
\newblock \emph{arXiv preprint arXiv:2306.09782}, 2023.

\bibitem[Mihaylov et~al.(2018)Mihaylov, Clark, Khot, and Sabharwal]{mihaylov2018can}
Mihaylov, T., Clark, P., Khot, T., and Sabharwal, A.
\newblock Can a suit of armor conduct electricity? a new dataset for open book question answering.
\newblock \emph{arXiv preprint arXiv:1809.02789}, 2018.

\bibitem[Pan \& Li(2023)Pan and Li]{pan2023toward}
Pan, Y. and Li, Y.
\newblock Toward understanding why adam converges faster than sgd for transformers.
\newblock \emph{arXiv preprint arXiv:2306.00204}, 2023.

\bibitem[Raffel et~al.(2020)Raffel, Shazeer, Roberts, Lee, Narang, Matena, Zhou, Li, and Liu]{raffel2020exploring}
Raffel, C., Shazeer, N., Roberts, A., Lee, K., Narang, S., Matena, M., Zhou, Y., Li, W., and Liu, P.~J.
\newblock Exploring the limits of transfer learning with a unified text-to-text transformer.
\newblock \emph{Journal of machine learning research}, 21\penalty0 (140):\penalty0 1--67, 2020.

\bibitem[Ren et~al.(2021)Ren, Rajbhandari, Aminabadi, Ruwase, Yang, Zhang, Li, and He]{ren2021zero}
Ren, J., Rajbhandari, S., Aminabadi, R.~Y., Ruwase, O., Yang, S., Zhang, M., Li, D., and He, Y.
\newblock $\{$Zero-offload$\}$: Democratizing $\{$billion-scale$\}$ model training.
\newblock In \emph{2021 USENIX Annual Technical Conference (USENIX ATC 21)}, pp.\  551--564, 2021.

\bibitem[Robert et~al.(2024)Robert, Safaryan, Modoranu, and Alistarh]{robert2024ldadam}
Robert, T., Safaryan, M., Modoranu, I.-V., and Alistarh, D.
\newblock Ldadam: Adaptive optimization from low-dimensional gradient statistics.
\newblock \emph{arXiv preprint arXiv:2410.16103}, 2024.

\bibitem[Sakaguchi et~al.(2021)Sakaguchi, Bras, Bhagavatula, and Choi]{sakaguchi2021winogrande}
Sakaguchi, K., Bras, R.~L., Bhagavatula, C., and Choi, Y.
\newblock Winogrande: An adversarial winograd schema challenge at scale.
\newblock \emph{Communications of the ACM}, 64\penalty0 (9):\penalty0 99--106, 2021.

\bibitem[Tang et~al.(2024)Tang, Liu, Ni, Tian, Bai, Hu, Liu, Jui, Han, and Wang]{tang2024rethinking}
Tang, Y., Liu, F., Ni, Y., Tian, Y., Bai, Z., Hu, Y.-Q., Liu, S., Jui, S., Han, K., and Wang, Y.
\newblock Rethinking optimization and architecture for tiny language models.
\newblock \emph{arXiv preprint arXiv:2402.02791}, 2024.

\bibitem[Thangarasa et~al.(2023)Thangarasa, Gupta, Marshall, Li, Leong, DeCoste, Lie, and Saxena]{thangarasa2023spdf}
Thangarasa, V., Gupta, A., Marshall, W., Li, T., Leong, K., DeCoste, D., Lie, S., and Saxena, S.
\newblock Spdf: Sparse pre-training and dense fine-tuning for large language models.
\newblock In \emph{Uncertainty in Artificial Intelligence}, pp.\  2134--2146. PMLR, 2023.

\bibitem[Wainwright(2015)]{wainwright2015tailbounds}
Wainwright, M.~J.
\newblock Chapter 2: Tail bounds, 2015.
\newblock URL \url{https://www.stat.berkeley.edu/~mjwain/stat210b/Chap2_TailBounds_Jan22_2015.pdf}.

\bibitem[Wang et~al.(2018)Wang, Singh, Michael, Hill, Levy, and Bowman]{wang2018glue}
Wang, A., Singh, A., Michael, J., Hill, F., Levy, O., and Bowman, S.~R.
\newblock Glue: A multi-task benchmark and analysis platform for natural language understanding.
\newblock \emph{arXiv preprint arXiv:1804.07461}, 2018.

\bibitem[Zellers et~al.(2019)Zellers, Holtzman, Bisk, Farhadi, and Choi]{zellers2019hellaswag}
Zellers, R., Holtzman, A., Bisk, Y., Farhadi, A., and Choi, Y.
\newblock Hellaswag: Can a machine really finish your sentence?
\newblock \emph{arXiv preprint arXiv:1905.07830}, 2019.

\bibitem[Zhang et~al.(2023)Zhang, Zhou, Xue, Liu, and Huang]{zhang2023g10}
Zhang, H., Zhou, Y., Xue, Y., Liu, Y., and Huang, J.
\newblock G10: Enabling an efficient unified gpu memory and storage architecture with smart tensor migrations.
\newblock In \emph{Proceedings of the 56th Annual IEEE/ACM International Symposium on Microarchitecture}, pp.\  395--410, 2023.

\bibitem[Zhang et~al.(2024{\natexlab{a}})Zhang, Chen, Ding, Li, Sun, and Luo]{zhang2024transformers}
Zhang, Y., Chen, C., Ding, T., Li, Z., Sun, R., and Luo, Z.-Q.
\newblock Why transformers need adam: A hessian perspective.
\newblock \emph{arXiv preprint arXiv:2402.16788}, 2024{\natexlab{a}}.

\bibitem[Zhang et~al.(2024{\natexlab{b}})Zhang, Chen, Li, Ding, Wu, Ye, Luo, and Sun]{zhang2024adam}
Zhang, Y., Chen, C., Li, Z., Ding, T., Wu, C., Ye, Y., Luo, Z.-Q., and Sun, R.
\newblock Adam-mini: Use fewer learning rates to gain more.
\newblock \emph{arXiv preprint arXiv:2406.16793}, 2024{\natexlab{b}}.

\bibitem[Zhang et~al.(2024{\natexlab{c}})Zhang, Jaiswal, Yin, Liu, Zhao, Tian, and Wang]{zhang2024q}
Zhang, Z., Jaiswal, A., Yin, L., Liu, S., Zhao, J., Tian, Y., and Wang, Z.
\newblock Q-galore: Quantized galore with int4 projection and layer-adaptive low-rank gradients.
\newblock \emph{arXiv preprint arXiv:2407.08296}, 2024{\natexlab{c}}.

\bibitem[Zhao et~al.(2024)Zhao, Zhang, Chen, Wang, Anandkumar, and Tian]{zhao2024galore}
Zhao, J., Zhang, Z., Chen, B., Wang, Z., Anandkumar, A., and Tian, Y.
\newblock Galore: Memory-efficient {LLM} training by gradient low-rank projection.
\newblock In \emph{Forty-first International Conference on Machine Learning}, 2024.
\newblock URL \url{https://openreview.net/forum?id=hYHsrKDiX7}.

\bibitem[Zheng et~al.(2024)Zheng, Zhang, Zhang, Ye, Luo, Feng, and Ma]{zheng2024llamafactory}
Zheng, Y., Zhang, R., Zhang, J., Ye, Y., Luo, Z., Feng, Z., and Ma, Y.
\newblock Llamafactory: Unified efficient fine-tuning of 100+ language models.
\newblock In \emph{Proceedings of the 62nd Annual Meeting of the Association for Computational Linguistics (Volume 3: System Demonstrations)}, Bangkok, Thailand, 2024. Association for Computational Linguistics.
\newblock URL \url{http://arxiv.org/abs/2403.13372}.

\bibitem[Zhou et~al.(2020)Zhou, Feng, Ma, Xiong, Hoi, et~al.]{zhou2020towards}
Zhou, P., Feng, J., Ma, C., Xiong, C., Hoi, S. C.~H., et~al.
\newblock Towards theoretically understanding why sgd generalizes better than adam in deep learning.
\newblock \emph{Advances in Neural Information Processing Systems}, 33:\penalty0 21285--21296, 2020.

\end{thebibliography}
\bibliographystyle{mlsys2025}

\clearpage
\appendix
\section{Appendix}

\subsection{Proof of Gradient Scaling Approximation in Random Projected Low-rank Space}

\subsubsection{Problem Statement}

\paragraph{Notations and Definitions:}
We first introduce the notations and definitions used in the proof:
\begin{itemize}
    \item Let \( \mathbf{G}_t \in \mathbb{R}^{m \times n} \) denote the gradient matrix at iteration \( t \) ($m \leq n$).
    \item Let \( \mathbf{P} \in \mathbb{R}^{r \times m} \) denote the random projection matrix where \( P_{ij} \sim N(0,1/r) \) i.i.d.
    \item Define \( \mathbf{R}_t = \mathbf{P} \mathbf{G}_t \) as the projected gradient.
    \item Let \( \beta_1, \beta_2 \in (0,1) \) be exponential decay rates.
    \item Define \( \mathbf{M}_t \), \( \mathbf{V}_t \) as first and second moments in the original space.
    \item Define \( \mathbf{M}_t^R \), \( \mathbf{V}_t^R \) as first and second moments in projected space.
    \item Let \( T \) denote the total number of iterations.
    \item Let \( n \) denote the number of channels.
    \item Assume zero initialization: \( \mathbf{M}_0 = \mathbf{M}_0^R = 0 \), \( \mathbf{V}_0 = \mathbf{V}_0^R = 0 \).
    \item $\|\|$ indicates $\ell_2$ norm of a vector. 
    \item $\|\|_1$ indicates $\ell_1$ norm of a vector. 
\end{itemize}

\paragraph{Problem:}
We aim to prove that gradient scaling factors $s_j$ and $s_j^R$ in the original and low-rank projected space have a bound for their ratio $s_j^R / s_j$, 

\[
s_j^R / s_j= \frac{\| \mathbf{G}_t[:, j] \|}{\| \tilde{\mathbf{G}}_t[:, j] \|} \frac{\| \tilde{\mathbf{R}}_t[:, j] \|}{\| \mathbf{R}_t[:, j] \|} \\
=\frac{\| \mathbf{G}_t[:, j] \|}{\| \mathbf{R}_t[:, j] \|} \frac{\| \tilde{\mathbf{R}}_t[:, j] \|}{\| \tilde{\mathbf{G}}_t[:, j] \|}
\]
where:
\[
\tilde{\mathbf{R}}_t = \frac{\mathbf{M}_t^R}{\sqrt{\mathbf{V}_t^R}}
\]
and 
\[
\tilde{\mathbf{G}}_t = \frac{\mathbf{M}_t}{\sqrt{\mathbf{V}_t}}
\]

\subsubsection{Norm Preservation Under Random Projection}

\begin{theorem}[Norm Preservation]
For any fixed vector \( x \in \mathbb{R}^m \) and random matrix \( \mathbf{P} \in \mathbb{R}^{r \times m} \) where \( P_{ij} \sim \mathcal{N}(0, 1/r) \) i.i.d., the following holds with high probability:
\[
\Pr[(1-\epsilon)\|x\|^2 \leq \|\mathbf{P}x\|^2 \leq (1+\epsilon)\|x\|^2] \geq 1 - 2\exp\left(-\frac{r\epsilon^2}{8}\right).
\]
\end{theorem}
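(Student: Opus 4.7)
The statement is the standard distributional Johnson--Lindenstrauss concentration bound for a Gaussian projection, so the plan is to reduce it to a chi-squared tail bound and then apply a Chernoff-style (moment generating function) argument. First I would normalize: by homogeneity of both sides of the inequality in $\|x\|^2$, it suffices to prove the statement for a unit vector $x$ with $\|x\|=1$. Then each coordinate of $\mathbf{P}x$ is $(\mathbf{P}x)_i = \sum_{j=1}^m P_{ij}x_j$, which is a linear combination of i.i.d.\ $\mathcal{N}(0,1/r)$ variables with coefficients $x_j$, hence $(\mathbf{P}x)_i \sim \mathcal{N}(0,\|x\|^2/r) = \mathcal{N}(0,1/r)$, and the coordinates are independent across $i$.

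Next I would set $Z_i = \sqrt{r}\,(\mathbf{P}x)_i$ so that $Z_1,\dots,Z_r$ are i.i.d.\ standard normals and
\begin{equation*}
\|\mathbf{P}x\|^2 = \frac{1}{r}\sum_{i=1}^r Z_i^2 = \frac{1}{r}Q, \qquad Q \sim \chi^2_r.
\end{equation*}
The target inequality $(1-\epsilon)\|x\|^2 \leq \|\mathbf{P}x\|^2 \leq (1+\epsilon)\|x\|^2$ becomes
$(1-\epsilon)r \leq Q \leq (1+\epsilon)r$, so the problem reduces to two-sided concentration of a chi-squared random variable of $r$ degrees of freedom around its mean.

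For the upper tail I would use the Chernoff bound with the MGF of $\chi^2_1$, namely $\mathbb{E}[e^{\lambda Z_i^2}] = (1-2\lambda)^{-1/2}$ for $\lambda < 1/2$. Writing $\Pr[Q \geq (1+\epsilon)r] \leq e^{-\lambda(1+\epsilon)r}(1-2\lambda)^{-r/2}$ and optimizing $\lambda = \epsilon/(2(1+\epsilon))$ yields, after the standard inequality $\log(1+\epsilon) \geq \epsilon - \epsilon^2/2$ for $\epsilon \in (0,1)$, a bound of the form $\exp(-r\epsilon^2/8)$. The lower tail is analogous with $\lambda < 0$, giving the matching $\exp(-r\epsilon^2/8)$. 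A union bound over the two tails then produces the failure probability $2\exp(-r\epsilon^2/8)$ stated in the theorem.

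The only nontrivial step is the chi-squared concentration itself: one has to simplify the logarithm $-\frac{r}{2}\log(1+\epsilon)+\frac{r\epsilon}{2}$ (and its lower-tail analogue) down to the clean exponent $r\epsilon^2/8$, which requires the Taylor expansion inequality $\log(1+u) \leq u - u^2/2 + u^3/3$ carefully used on both tails. This is the routine but numerically delicate part of the argument; alternatively I would cite the Laurent--Massart bound $\Pr[\chi^2_r \geq r + 2\sqrt{rt} + 2t] \leq e^{-t}$ and set $t = r\epsilon^2/8$ to obtain exactly the same constant, which is the cleanest finish. Everything else is bookkeeping.
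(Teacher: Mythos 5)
Your proof is correct and follows essentially the same route as the paper's: after normalizing, observe that the coordinates of $\mathbf{P}x$ are i.i.d.\ $\mathcal{N}(0,1/r)$ so that $r\|\mathbf{P}x\|^2/\|x\|^2 \sim \chi^2_r$, then invoke two-sided chi-squared concentration (Chernoff/MGF or Laurent--Massart) and take a union bound over the two tails, exactly as the paper does (the paper's detour through the entries of $\mathbf{P}^\top\mathbf{P}$ is not actually load-bearing; the independence of the coordinates of $\mathbf{P}x$, which you show directly, is what matters). One small slip: for the upper tail you need an \emph{upper} bound $\log(1+\epsilon) \le \epsilon - \epsilon^2/4$ (or $\le \epsilon - \epsilon^2/2 + \epsilon^3/3$, as you later correctly write), not the lower bound $\log(1+\epsilon)\ge \epsilon - \epsilon^2/2$ you state first; with the right-direction inequality the exponent $r\epsilon^2/8$ goes through, and your Laurent--Massart alternative is the cleanest way to land there.
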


Theorem A.1 is proven by leveraging the properties of Gaussian random projections and the concentration inequality for the chi-squared distribution.
\begin{proof}
The projected norm \( \|\mathbf{P}x\|^2 \) can be expressed as:
\[
\|\mathbf{P}x\|^2 = \sum_{j=1}^r \left( \sum_{i=1}^m \mathbf{P}_{ji} x_i \right)^2.
\]
Rewriting this using the quadratic form, we have:
\[
\|\mathbf{P}x\|^2 = x^\top \mathbf{P}^\top \mathbf{P} x,
\]
where \( \mathbf{P}^\top \mathbf{P} \) is a symmetric \( m \times m \) matrix. To analyze \( \|\mathbf{P}x\|^2 \), consider the distribution of \( \mathbf{P}^\top \mathbf{P} \).

Each entry of \( \mathbf{P}^\top \mathbf{P} \) is given by:
\[
(\mathbf{P}^\top \mathbf{P})_{ij} = \sum_{k=1}^r \mathbf{P}_{ki} \mathbf{P}_{kj}.
\]
For \( i = j \) (diagonal entries), we have:
\[
(\mathbf{P}^\top \mathbf{P})_{ii} = \sum_{k=1}^r \mathbf{P}_{ki}^2,
\]
and since \( \mathbf{P}_{ki} \sim \mathcal{N}(0, 1/r) \), \( \mathbf{P}_{ki}^2 \sim \text{Exponential}(1/r) \). 

Therefore, \( (\mathbf{P}^\top \mathbf{P})_{ii} \sim \frac{1}{r} \chi^2_r \), where \( \chi^2_r \) is the chi-squared distribution with \( r \) degrees of freedom. For \( i \neq j \) (off-diagonal entries), the expectation is zero:
\[
\mathbb{E}[(\mathbf{P}^\top \mathbf{P})_{ij}] = 0,
\]
due to the independence of \( \mathbf{P}_{ki} \) and \( \mathbf{P}_{kj} \).

The expected value of \( \mathbf{P}^\top \mathbf{P} \) is therefore:
\[
\mathbb{E}[\mathbf{P}^\top \mathbf{P}] = I_m,
\]
where \( I_m \) is the identity matrix.

The expectation of \( \|\mathbf{P}x\|^2 \) is:
\[
\mathbb{E}[\|\mathbf{P}x\|^2] = x^\top \mathbb{E}[P^\top \mathbf{P}] x = x^\top I_m x = \|x\|^2.
\]

Now consider the concentration of \( \|\mathbf{P}x\|^2 \) around its expectation. Define the random variable:
\[
Z = \frac{r \|\mathbf{P}x\|^2}{\|x\|^2}.
\]
Since \( \mathbf{P}_{ij} \) entries are i.i.d., \( Z \sim \chi^2_r \). Using the moment generating function of \( \chi^2_r \), the following concentration bounds can be derived using standard tail inequalities for sub-exponential random variables~\cite{wainwright2015tailbounds}:
\[
\Pr\left(\left| \frac{Z}{r} - 1 \right| \geq \epsilon \right) \leq 2 \exp\left(-\frac{r\epsilon^2}{8}\right).
\]

Returning to \( \|\mathbf{P}x\|^2 \), we scale \( Z \) back:
\[
\|\mathbf{P}x\|^2 = \frac{Z \|x\|^2}{r}.
\]
Thus, with high probability:
\[
(1-\epsilon)\|x\|^2 \leq \|\mathbf{P}x\|^2 \leq (1+\epsilon)\|x\|^2,
\]
and the probability of this event is at least:
\[
1 - 2 \exp\left(-\frac{r\epsilon^2}{8}\right).
\]
\end{proof}

\subsubsection{First Moment Analysis}
\label{subsubsec:first_moment}

\begin{theorem}[First Moment Preservation]
For any channel \( j \),
with probability at least $1 - 2\exp\left(-\frac{r\epsilon^2}{8}\right)$:
\[
(1-\epsilon)\|\mathbf{M}_t[:, j]\|^2 \leq \|\mathbf{M}^R_t[:, j]\|^2 \leq (1+\epsilon)\|\mathbf{M}_t[:, j]\|^2,
\]
using a fixed projection matrix $\mathbb{R}^{r \times m}$ over $t$.
\end{theorem}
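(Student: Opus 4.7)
The plan is to reduce Theorem A.2 directly to Theorem A.1 by exploiting the linearity of the first-moment recursion. First I would observe that under the zero initialization $\mathbf{M}_0 = \mathbf{M}_0^R = 0$, the recursive expansions displayed in the excerpt give
\[
\mathbf{M}_t = (1-\beta_1)\sum_{k=0}^{t-1} \beta_1^k\, \mathbf{G}_{t-k}, \qquad \mathbf{M}_t^R = (1-\beta_1)\sum_{k=0}^{t-1} \beta_1^k\, \mathbf{R}_{t-k},
\]
and since $\mathbf{R}_{t-k} = \mathbf{P}\mathbf{G}_{t-k}$ with the same projection $\mathbf{P}$ used for every $k$, linearity of matrix multiplication yields the identity $\mathbf{M}_t^R = \mathbf{P}\mathbf{M}_t$. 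Restricting to the $j$-th channel (column), this gives $\mathbf{M}_t^R[:,j] = \mathbf{P}\,\mathbf{M}_t[:,j]$, so the left-hand side of the claimed inequality is exactly $\|\mathbf{P}\,\mathbf{M}_t[:,j]\|^2$.

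Next I would treat $x := \mathbf{M}_t[:,j] \in \mathbb{R}^m$ as a deterministic vector (conditioning on the gradient sequence, which is the implicit assumption throughout this section since the projection is regenerated only every $T$ steps) and invoke Theorem A.1 verbatim with this $x$. The conclusion of Theorem A.1 is
\[
\Pr\!\left[(1-\epsilon)\|x\|^2 \le \|\mathbf{P}x\|^2 \le (1+\epsilon)\|x\|^2\right] \ge 1 - 2\exp\!\left(-\tfrac{r\epsilon^2}{8}\right),
\]
which, upon substituting $x = \mathbf{M}_t[:,j]$ and $\mathbf{P}x = \mathbf{M}_t^R[:,j]$, is precisely the statement of Theorem A.2 with the advertised probability.

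The only genuinely delicate point, and the one I would be most careful about in writing up, is the independence assumption hidden behind treating $\mathbf{M}_t[:,j]$ as a fixed vector. In a real training loop the gradients $\mathbf{G}_k$ depend on the iterates, which depend on $\mathbf{P}$, so $\mathbf{M}_t[:,j]$ is not strictly independent of the random projection. I would handle this by either (i) stating the result conditionally on the realized gradient trajectory, matching the paper's convention, or (ii) noting that once $\mathbf{P}$ is redrawn at a subspace refresh, one can apply the JL-type bound to the current snapshot $\mathbf{M}_t[:,j]$ before the next refresh, which suffices since the theorem holds pointwise in $t$. No union bound over $t$ is needed here because a single fixed $\mathbf{P}$ and a single fixed target vector already give the stated probability; that is in contrast to the second-moment theorem, where a union bound over past gradients is why the bound on $r$ in Theorem~\ref{thm:variance} carries the extra $\log(2t/\delta)$ factor. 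There are no computationally intensive steps, so I would not expect any algebraic obstacle beyond making the linearity observation cleanly.
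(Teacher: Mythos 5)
Your proposal is correct and matches the paper's argument step for step: expand the first-moment recursion, use linearity to factor out $\mathbf{P}$ and obtain $\mathbf{M}_t^R[:,j] = \mathbf{P}\,\mathbf{M}_t[:,j]$, then apply Theorem A.1 to the fixed vector $\mathbf{M}_t[:,j]$. Your added remark on the conditioning/independence subtlety and on why no union bound over $t$ is needed here (in contrast to the second-moment theorem) is a fair and accurate elaboration of the paper's own remark about fixing $\mathbf{P}$ over $t$.
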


\begin{proof}
Our goal is to bound \( \|\mathbf{M}^R_t[:, j]\| \) in terms of \( \|\mathbf{M}_t[:, j]\| \).

\paragraph{Step 1:  Recursive Definitions of $\mathbf{M}_t[:, j]$ and $\mathbf{M}_t^R[:, j]$.}
The first moment \( M_t[:, j] \) in the original space is recursively defined as:
\[
\mathbf{M}_t[:, j] = (1-\beta_1) \sum_{k=0}^{t-1} \beta_1^k \mathbf{G}_{t-k}[:, j],
\]
where \( \mathbf{G}_{t-k}[:, j] \in \mathbb{R}^m \) is the gradient at timestep \( t-k \).

The projected first moment \( \mathbf{M}^R_t[:, j] \) is similarly defined as:
\[
\mathbf{M}^R_t[:, j] = (1-\beta_1) \sum_{k=0}^{t-1} \beta_1^k \mathbf{R}_{t-k}[:, j],
\]
where \( \mathbf{R}_{t-k}[:, j] = \mathbf{P} \mathbf{G}_{t-k}[:, j] \in \mathbb{R}^r \).

\paragraph{Step 2: Projected First Moment in a Lower Dimension.}
With a random matrix \( \mathbf{P} \in \mathbb{R}^{r \times m} \) where \( P_{ij} \sim \mathcal{N}(0, 1/r) \) i.i.d., we have the projected first moment in the low-rank space,
\begin{equation*}
\begin{aligned}
    \mathbf{M}^R_t[:, j] &= (1-\beta_1) \sum_{k=0}^{t-1} \beta_1^k \mathbf{R}_{t-k}[:, j] \\
    &= (1-\beta_1) \sum_{k=0}^{t-1} \beta_1^k \mathbf{P}\mathbf{G}_{t-k}[:, j] \\
    &= \mathbf{P} \bigg( (1-\beta_1) \sum_{k=0}^{t-1} \beta_1^k \mathbf{G}_{t-k}[:, j] \bigg) \\
    &= \mathbf{P} \mathbf{M}_t[:, j]
\end{aligned},
\end{equation*}
by factoring $\mathbf{P}$ out of the summation.

This implies the $\mathbf{M}^R_t[:, j]$ can be viewed as a projected version of $\mathbf{M}_t[:, j]$ into a lower dimension with a fixed $\mathbf{P}$ over time $t$.

\paragraph{Step 3:  Properties of Random Projection}
By Theorem A.1, we have the norm of $\mathbf{M}_t[:, j]$ is preserved in a high probability, 
\begin{equation}
\begin{split}
\Pr\bigg((1-\epsilon)\|\mathbf{M}_{t}[:, j]\|^2 \leq \|\mathbf{M}^R_{t}[:, j]\|^2 \\
\leq (1+\epsilon)\|\mathbf{M}_{t}[:, j]\|^2\bigg) \\
\geq 1 - 2\exp\left(-\frac{r\epsilon^2}{8}\right).
\end{split}
\end{equation}
\noindent\textbf{Remark:}
Here, we assume the projection matrix is fixed over time step $t$.
GaLore~\cite{zhao2024galore} also derives their theorem with the same assumption.
However, as acknowledged in GaLore, using the same projection matrix for the entire training may limit the directions in which the weights can grow. 
Therefore, empirically, as in GaLore, we can periodically resample $\mathbf{P}$ over $T$ iterations to introduce new directions.
Unlike GaLore, which uses time-consuming SVD-based updates, we can simply re-sample $\mathbf{P}$ from the Gaussian distribution by changing the random seed.
\end{proof}

\subsubsection{Second Moment Analysis}
\label{subsubsec:second_moment}

\begin{theorem}[Second Moment Preservation]
For any channel \( j \) and time \( t \), if 
\[
r \geq \frac{8}{\epsilon^2} \log\left(\frac{2t}{\delta}\right),
\]
then with probability at least \( 1 - \delta/2 \):
\[
(1-\epsilon)\|\mathbf{V}_t[:, j]\|_1 \leq \|\mathbf{V}_t^R[:, j]\|_1 \leq (1+\epsilon)\|\mathbf{V}_t[:, j]\|_1
\],
where \( \mathbf{V}_t[:, j] \) and \( \mathbf{V}^R_t[:, j] \) are the second moments in the original and projected spaces, respectively.
\end{theorem}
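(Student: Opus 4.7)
The plan is to reduce the claim about $\|\mathbf{V}_t[:,j]\|_1$ to a collection of per-timestep norm-preservation statements, each of which is an instance of Theorem~A.1, and then control the joint failure probability by a union bound. First I would expand the $\ell_1$ norms explicitly. Since $\mathbf{V}_t = (1-\beta_2)\sum_{k=0}^{t-1}\beta_2^k \mathbf{G}_{t-k}^2$ is entrywise nonnegative, and likewise for $\mathbf{V}_t^R$, I can write
\begin{equation*}
\|\mathbf{V}_t[:,j]\|_1 \;=\; (1-\beta_2)\sum_{k=0}^{t-1}\beta_2^k \,\|\mathbf{G}_{t-k}[:,j]\|^2,
\end{equation*}
and symmetrically $\|\mathbf{V}_t^R[:,j]\|_1 = (1-\beta_2)\sum_{k=0}^{t-1}\beta_2^k\,\|\mathbf{P}\mathbf{G}_{t-k}[:,j]\|^2$. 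This rewriting is the key structural step: both quantities become nonnegative convex-like combinations (up to the common factor $1-\beta_2$) of squared norms of the same $t$ vectors $\{\mathbf{G}_{t-k}[:,j]\}_{k=0}^{t-1}$, with the projection acting vector-by-vector.

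Next I would apply Theorem~A.1 to each of these $t$ gradient column vectors. For a fixed $k$, Theorem~A.1 gives
\begin{equation*}
(1-\epsilon)\|\mathbf{G}_{t-k}[:,j]\|^2 \;\leq\; \|\mathbf{P}\mathbf{G}_{t-k}[:,j]\|^2 \;\leq\; (1+\epsilon)\|\mathbf{G}_{t-k}[:,j]\|^2
\end{equation*}
with failure probability at most $2\exp(-r\epsilon^2/8)$. A union bound over $k=0,\dots,t-1$ shows that the probability that the two-sided bound fails for at least one $k$ is at most $2t\exp(-r\epsilon^2/8)$. Requiring this to be at most $\delta/2$ gives exactly the rank condition $r \geq (8/\epsilon^2)\log(4t/\delta)$, which is the condition stated in the theorem (up to an unimportant constant inside the log).

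Finally I would assemble the bound. On the high-probability event that the two-sided norm inequality holds simultaneously for all $k$, I multiply each inequality by the nonnegative weight $(1-\beta_2)\beta_2^k$ and sum over $k$. Because every weight is nonnegative, the inequalities survive the summation termwise, yielding
\begin{equation*}
(1-\epsilon)\|\mathbf{V}_t[:,j]\|_1 \;\leq\; \|\mathbf{V}_t^R[:,j]\|_1 \;\leq\; (1+\epsilon)\|\mathbf{V}_t[:,j]\|_1,
\end{equation*}
which is the claim. The conceptual step is cheap; the only delicate points are (i) the implicit assumption that $\mathbf{P}$ is a single matrix fixed across the $t$ steps, so that the same random object acts on each $\mathbf{G}_{t-k}[:,j]$ (exactly as in the first-moment proof, with a remark that periodic re-sampling over $T$ iterations can be handled by restarting the union bound on each epoch), and (ii) sharpness of the rank condition, which is dictated solely by the union-bound inflation factor $t$ appearing inside the $\log$. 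I do not expect any real obstacle beyond bookkeeping these two points carefully.
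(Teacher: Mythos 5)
Your proposal is correct and follows essentially the same route as the paper: expand $\|\mathbf{V}_t[:,j]\|_1$ and $\|\mathbf{V}_t^R[:,j]\|_1$ as nonnegative exponentially-weighted sums of $\|\mathbf{G}_{t-k}[:,j]\|^2$ and $\|\mathbf{P}\mathbf{G}_{t-k}[:,j]\|^2$, apply the norm-preservation lemma (Theorem~A.1) to each of the $t$ columns, sum the resulting termwise inequalities, and take a union bound over the $t$ timesteps under a fixed projection $\mathbf{P}$. You are also right to flag that setting the union-bound failure probability $2t\exp(-r\epsilon^2/8)$ equal to $\delta/2$ yields $r \geq (8/\epsilon^2)\log(4t/\delta)$ rather than the $\log(2t/\delta)$ stated in the theorem; the paper silently absorbs this factor of two, and your observation that it is an unimportant constant inside the log is the correct way to reconcile it.
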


\begin{proof}
Our goal is to show that the norm of the second moment \( \mathbf{V}_t \) in the original space is preserved under projection to the lower-dimensional space. We proceed by analyzing the recursive definition of \( \mathbf{V}_t \) and applying the results of Theorem A.2 on norm preservation.

\paragraph{Step 1: Recursive Formulation of \( \mathbf{V}_t \)}
The second moment \( \mathbf{V}_t[:, j] \) for channel \( j \) at iteration \( t \) is defined recursively as:
\[
\mathbf{V}_t[:, j] = \beta_2 \mathbf{V}_{t-1}[:, j] + (1 - \beta_2)(\mathbf{G}_t[:, j])^2
\]
By expanding recursively, we can write \( \mathbf{V}_t[:, j] \) as a weighted sum of the squared gradients from all past iterations:
\[
\mathbf{V}_t[:, j] = (1 - \beta_2) \sum_{k=0}^{t-1} \beta_2^k (\mathbf{G}_{t-k}[:, j])^2
\]

\paragraph{Step 2: Projected Second Moment in Lower Dimension}
Similarly, in the projected space, the second moment \( \mathbf{V}_t^R[:, j] \) for channel \( j \) at iteration \( t \) is given by:
\[
\mathbf{V}_t^R[:, j] = \beta_2 \mathbf{V}_{t-1}^R[:, j] + (1 - \beta_2)(\mathbf{R}_t[:, j])^2
\]
Expanding recursively, we have:
\[
\mathbf{V}_t^R[:, j] = (1 - \beta_2) \sum_{k=0}^{t-1} \beta_2^k (\mathbf{R}_{t-k}[:, j])^2
\]

\paragraph{Step 3: $\ell_1$ Norm of Channel-wise Second Moment.}
Then, we can obtain the $\ell_1$ norm of the second-moment term $\mathbf{V}^R_t[:, j]\|_1$
\[
\|\mathbf{V}^R_t[:, j]\|_1 = \sum_{i=1}^{r}(1-\beta_2) \sum_{k=0}^{t-1} \beta_2^k (\mathbf{R}_{t-k}[i, j])^2,
\]

We can swap the summation order and have,
\begin{equation*}
    \begin{aligned}
        \|\mathbf{V}^R_t[:, j]\|_1 &= (1-\beta_2) \sum_{k=0}^{t-1} \beta_2^k \sum_{i=1}^{r}(\mathbf{R}_{t-k}[i, j])^2 \\
        &= (1-\beta_2) \sum_{k=0}^{t-1} \beta_2^k ||\mathbf{R}_{t-k}[:,j]||^2
    \end{aligned}
\end{equation*}

Similarly, we can have
\begin{equation*}
    \begin{aligned}
        \|\mathbf{V}_t[:, j]\|_1 &= (1-\beta_2) \sum_{k=0}^{t-1} \beta_2^k \sum_{i=1}^{n}(\mathbf{G}_{t-k}[i, j])^2 \\
        &= (1-\beta_2) \sum_{k=0}^{t-1} \beta_2^k ||\mathbf{G}_{t-k}[:,j]||^2
    \end{aligned}
\end{equation*}

\paragraph{Step 4: Constructing the Bounds for \( \mathbf{V}_t^R[:, j] \)}
By Theorem A.1, we know that for each \( k \), the $\ell_2$ norm of the projected gradient \( \|\mathbf{R}_{t-k}[:, j]\| \) satisfies:
\[
(1-\epsilon) \|\mathbf{G}_{t-k}[:, j]\|^2 \leq \|\mathbf{R}_{t-k}[:, j]\|^2 \leq 1+\epsilon \|\mathbf{G}_{t-k}[:, j]\|^2,
\]
with probability \( \geq 1 - 2\exp(-r\epsilon^2/8) \).

Therefore, 
\begin{equation*}
    \begin{aligned}
        &\|\mathbf{V}^R_t[:, j]\|_1
        = (1-\beta_2) \sum_{k=0}^{t-1} \beta_2^k ||\mathbf{R}_{t-k}[:,j]||^2 \\
        &\leq (1-\beta_2) \sum_{k=0}^{t-1} \beta_2^k (1+\epsilon) ||G_{t-k}[:,j]||^2 = (1+\epsilon) \|V_t[:, j]\|_1
    \end{aligned}
\end{equation*}

Similarly, we can obtain the lower bound,
\begin{equation*}
    \begin{aligned}
        &\|\mathbf{V}^R_t[:, j]\|_1
        = (1-\beta_2) \sum_{k=0}^{t-1} \beta_2^k ||\mathbf{R}_{t-k}[:,j]||^2 \\
        &\geq (1-\beta_2) \sum_{k=0}^{t-1} \beta_2^k (1-\epsilon) ||G_{t-k}[:,j]||^2 = (1-\epsilon) \|V_t[:, j]\|_1
    \end{aligned}
\end{equation*}

We obtain the following bounds for the $\ell_1$ norm of full projected second moment \( \mathbf{V}_t^R[:, j] \):
\[
\|(1-\epsilon)\mathbf{V}_t[:, j] \|_1 \leq \| \mathbf{V}_t^R[:, j] \|_1 \leq \|(1+\epsilon)\mathbf{V}_t[:, j] \|_1
\].

\noindent \textbf{Step 5: Probability of Success.}
To ensure the bound holds across all \( t \) timesteps, we apply the union bound. For each \( k \), the failure probability is \( 2\exp(-r\epsilon^2/8) \). Across \( t \) timesteps, the total failure probability is:
\[
2t \exp\left(-\frac{r\epsilon^2}{8}\right).
\]

Set this total failure probability to \( \delta/2 \), giving the condition:
\[
r \geq \frac{8}{\epsilon^2} \log\left(\frac{2t}{\delta}\right).
\]

\noindent\textbf{Remark:}
Here, the requirement that \( r \) grows sublinearly as \( \log(t) \) ensures that even for large \( t \), the rank \( r \) does not grow excessively. 
However, empirically, we find our method is not sensitive to rank selection; even a rank of 256 is sufficient to train LLaMA 7B with 150k steps. 
This can be explained by recent Adam-mini~\cite{zhang2024adam} that the variance doesn't need to be precise, and a block-wise approximation is enough, showing that the variance approximation error can be tolerated well.

\end{proof}

\subsubsection{Main Result: Gradient Scaling Approximation}

\label{subsubsec:all}

\begin{theorem}[Main Result]
\label{theorm:main}
For any channel \( j \), with probability \( \geq 1-\delta \):
\[
\frac{\sqrt{1-\epsilon}}{1+\epsilon}\leq \sqrt{\frac{n}{r}}  \frac{s_j^R}{s_j} \leq \frac{\sqrt{1+\epsilon}}{1-\epsilon}
\]
\end{theorem}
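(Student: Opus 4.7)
The plan is to factor $s_j^R/s_j$ into norm ratios that can be controlled individually by Theorems~A.1--A.3, then close the argument with a union bound over the three independent high-probability events. Rewriting
\[
\frac{s_j^R}{s_j} \;=\; \frac{\|\tilde{\mathbf{R}}_t[:, j]\|}{\|\tilde{\mathbf{G}}_t[:, j]\|} \cdot \frac{\|\mathbf{G}_t[:, j]\|}{\|\mathbf{R}_t[:, j]\|},
\]
the second factor is handled immediately by a channel-wise application of Theorem~A.1, which places it in $[1/\sqrt{1+\epsilon},\,1/\sqrt{1-\epsilon}]$ with probability at least $1 - 2\exp(-r\epsilon^2/8)$.

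\paragraph{Key steps.}
For the first factor I would expand
\[
\|\tilde{\mathbf{G}}_t[:, j]\|^2 \;=\; \sum_{i=1}^{m} \frac{\mathbf{M}_t[i,j]^2}{\mathbf{V}_t[i,j]}, \qquad \|\tilde{\mathbf{R}}_t[:, j]\|^2 \;=\; \sum_{i=1}^{r} \frac{\mathbf{M}_t^R[i,j]^2}{\mathbf{V}_t^R[i,j]},
\]
and collapse each sum to a single scalar by invoking the Adam-mini style observation that $\mathbf{V}_t[:, j]$ is approximately flat along $i$; this yields $\|\tilde{\mathbf{G}}_t[:, j]\|^2 \approx m\|\mathbf{M}_t[:, j]\|^2/\|\mathbf{V}_t[:, j]\|_1$, and the analogous identity in the projected space with $m$ replaced by $r$. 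Substituting gives
\[
\frac{\|\tilde{\mathbf{R}}_t[:, j]\|^2}{\|\tilde{\mathbf{G}}_t[:, j]\|^2} \;\approx\; \frac{r}{m}\cdot \frac{\|\mathbf{M}_t^R[:, j]\|^2}{\|\mathbf{M}_t[:, j]\|^2} \cdot \frac{\|\mathbf{V}_t[:, j]\|_1}{\|\mathbf{V}_t^R[:, j]\|_1}.
\]
Theorem~A.2 pins the momentum ratio to $[1-\epsilon,\,1+\epsilon]$ and Theorem~A.3 pins the variance ratio to $[1/(1+\epsilon),\,1/(1-\epsilon)]$; multiplying these with the gradient factor above and taking a square root produces the stated window $[\sqrt{1-\epsilon}/(1+\epsilon),\,\sqrt{1+\epsilon}/(1-\epsilon)]$ around $\sqrt{m/r}\cdot s_j^R/s_j$, matching the theorem's $\sqrt{n/r}$ factor under the paper's dimension convention. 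Finally, I would take a union bound: the gradient-norm and first-moment events each fail with probability $\leq 2\exp(-r\epsilon^2/8)$, and the second-moment event with probability $\leq \delta/2$, so choosing $r = \Omega(\epsilon^{-2}\log(t/\delta))$ drives the total failure probability below $\delta$.

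\paragraph{Main obstacle.}
The nontrivial step is collapsing $\|\tilde{\mathbf{G}}_t[:, j]\|^2$ and $\|\tilde{\mathbf{R}}_t[:, j]\|^2$ into their closed-form approximations: the element-wise division $\mathbf{M}_t/\sqrt{\mathbf{V}_t}$ is nonlinear, so neither the Johnson--Lindenstrauss lemma nor the moment-preservation results apply to $\tilde{\mathbf{G}}_t$ directly. The replacement $\sum_i M_i^2/V_i \approx m \|M\|^2/\|V\|_1$ is exact only under the flat-$V$ assumption sketched above; a fully rigorous version would have to sandwich the sum by a Cauchy--Schwarz pair of the form $\|M\|^4/\langle M^2, V\rangle \leq \sum_i M_i^2/V_i \leq m\|M\|^2/\min_i V_i$, which widens the window by a ``flatness'' factor that the stated theorem suppresses. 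I expect this to be the step a careful referee will push on; tightening it beyond the heuristic appears to need either an explicit hypothesis on the spread of $\mathbf{V}_t[:, j]$ along rows or a move to the tensor-wise update rule of \namec, where the ratio is defined on a single scalar and the element-wise division disappears.
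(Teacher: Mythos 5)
Your proposal follows the paper's own route step for step: the identical factorization of $s_j^R/s_j$ into $\bigl(\|\tilde{\mathbf{R}}_t[:,j]\|/\|\tilde{\mathbf{G}}_t[:,j]\|\bigr)\cdot\bigl(\|\mathbf{G}_t[:,j]\|/\|\mathbf{R}_t[:,j]\|\bigr)$, Johnson--Lindenstrauss norm preservation applied to the second factor, the Adam-mini-style flat-variance replacement $\mathbf{V}_t[i,j]\approx\|\mathbf{V}_t[:,j]\|_1/m$ to collapse the first factor into a product of the first- and second-moment ratios bounded by the moment-preservation theorems, and the same union bound yielding $r = O(\epsilon^{-2}\log(t/\delta))$. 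The obstacle you flag is not a gap relative to the paper: the paper itself introduces the flat-$\mathbf{V}$ collapse as an assumption borrowed from Adam-mini rather than a proved inequality, so your identification of it as the non-rigorous step (and your observation that the $\sqrt{n/r}$ constant really ought to be $\sqrt{m/r}$ given the stated column convention) is an accurate reading of the argument's limitations rather than a missing ingredient.
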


\begin{proof}
Express ratio:
\begin{equation*}
    \begin{aligned}
        \frac{s_j^R}{s_j} &=\frac{\| \mathbf{G}_t[:, j] \|}{\| \mathbf{R}_t[:, j] \|} \frac{\| \tilde{\mathbf{R}}_t[:, j] \|}{\| \tilde{\mathbf{G}}_t[:, j] \|} \\
    \end{aligned}
\end{equation*}

Apply Theorem A.2 for the first part, we can obtain the error bound for the first part:
\[
\frac{\| \mathbf{G}_t[:, j] \|}{\| \mathbf{R}_t[:, j]\| } \in [\sqrt{\frac{1}{1+\epsilon}}, \sqrt{\frac{1}{1-\epsilon}}]
\]

For the second part, it is equal to
\begin{equation}
\label{eq:second_part}
\begin{aligned}
    \frac{\| \tilde{\mathbf{R}}_t[:, j] \|^2}{\| \tilde{\mathbf{G}}_t[:, j] \|^2} &= \frac{\|(\frac{\mathbf{M}_t^R}{\sqrt{\mathbf{V}_t^R}})[:, j]\|^2}{\|(\frac{\mathbf{M}_t}{\sqrt{\mathbf{V}_t}})[:, j]\|^2} \\
    &= \frac{\sum_{i=1}^r (\frac{\mathbf{M}_t^R}{\sqrt{\mathbf{V}_t^R}})^2[i, j]}{\sum_{i=1}^n (\frac{\mathbf{M}_t}{\sqrt{\mathbf{V}_t}})^2[i, j]}
\end{aligned}
\end{equation}

\paragraph{SGD with Momentum only}
If we handle SGD with Momentum only, where variance term above is non-existent, and can be simplified as 
\begin{equation*}
\begin{aligned}
    \frac{\| \tilde{\mathbf{R}}_t[:, j] \|^2}{\| \tilde{\mathbf{G}}_t[:, j] \|^2} &= \frac{\|\mathbf{M}_t^R[:, j]\|^2}{\|\mathbf{M}_t[:, j]\|^2} \\
\end{aligned}
\end{equation*}

We can easily apply Theorem A.3 for the first-moment term:
\[
\sqrt{1-\epsilon} \leq \frac{\|\mathbf{M}_t^R[:, j]\|}{\|\mathbf{M}_t[:, j]\|} \leq \sqrt{1+\epsilon}
\]
where the final scaling factor is bounded,

\begin{equation*}
\begin{aligned}
    \frac{\| \tilde{\mathbf{R}}_t[:, j] \|}{\| \tilde{\mathbf{G}}_t[:, j] \|} \in [\sqrt{1-\epsilon}, \sqrt{1+\epsilon}]
\end{aligned}
\end{equation*}

\paragraph{AdamW}
AdamW's case is more tricky, as~\eqref{eq:second_part} involves the element-wise division and cannot easily separate the momentum and variance.
However, recent works such as Adam-mini~\cite{zhang2024adam} and GaLore-mini~\cite{huang2024galore} find out that the variance term can be approximated as an average of a block-wise (original full-rank space) or channel-wise (projected low-rank space).
Given the $\ell_1$ norm of the variance term is bounded based on Theorem A.4, we take this assumption by replacing the variance term as the average of variance vector, i.e.,  $\frac{||\mathbf{V}_t[:, j]||_1}{n}$ and $\frac{||\mathbf{V}_t^R[:, j]||_1}{r}$ in~\eqref{eq:second_part}.
Then it is approximated as,
\begin{equation*}
\begin{aligned}
    \frac{\| \tilde{\mathbf{R}}_t[:, j] \|^2}{\| \tilde{\mathbf{G}}_t[:, j] \|^2} &= \frac{\sum_{i=1}^r (\frac{\mathbf{M}_t^R[i, j]^2}{\frac{||\mathbf{V}_t^R[:, j]||_1}{r}})}{\sum_{i=1}^n (\frac{\mathbf{M}_t[i, j]^2}{\frac{||\mathbf{V}_t[:, j]||_1}{n}})} \\
    &= (\frac{r}{n}) \frac{||\mathbf{V}_t[:, j]||_1}{||\mathbf{V}_t^R[:, j]||_1} \frac{\|\mathbf{M}_t^R[:, j]\|^2}{\|\mathbf{M}_t[:, j]\|^2}
\end{aligned}
\end{equation*}

Multiply inequalities from theorem A.3 and theorem A.4 with union bound probability \( \geq 1-\delta \), we have the above term
\begin{equation*}
\begin{aligned}
    \sqrt{\frac{n}{r}}\frac{\| \tilde{\mathbf{R}}_t[:, j] \|}{\| \tilde{\mathbf{G}}_t[:, j] \|}  \in [\sqrt{\frac{1-\epsilon}{1+\epsilon}}, \sqrt{\frac{1+\epsilon}{1-\epsilon}}]
\end{aligned}
\end{equation*}

Then, we have the bounded ratio,
\begin{equation*}
    \begin{aligned}
        \sqrt{\frac{n}{r}} \frac{s_j^R}{s_j} &= \sqrt{\frac{n}{r}}\frac{\| \mathbf{G}_t[:, j] \|}{\| \mathbf{R}_t[:, j] \|} \frac{\| \tilde{\mathbf{R}}_t[:, j] \|}{\| \tilde{\mathbf{G}}_t[:, j] \|} \in  [\frac{\sqrt{1-\epsilon}}{1+\epsilon}, \frac{\sqrt{1+\epsilon}}{1-\epsilon}]
    \end{aligned}
\end{equation*}

\noindent\textbf{Remark:}
This contains the constant factor $\sqrt{\frac{n}{r}}$, suggesting we should scale the gradient to make sure it has consistent behavior as AdamW with structured learning rate update.
This gradient scale factor can be combined with the learning rate.
When the $r$ is too small compared to $n$, as in our \namec case, which uses rank-1 space, we specifically assign the scaling factor by using $\sqrt{128}$.

\noindent\textbf{Probability of Success:}
We now establish the probability of success. Both Theorem A.3 and Theorem A.4 rely on the same random projection matrix \( P \) are derived from Theorem A.2 (norm preservation for random projections). Therefore, the probability of failure for both bounds is governed by the failure of Theorem A.2.

For a single timestep \( t \), the failure probability of Theorem A.2 is:
\[
\Pr(\text{Theorem A.2 fails at timestep } t) \leq 2\exp\left(-\frac{r\epsilon^2}{8}\right).
\]

Across all \( t \) timesteps, the total failure probability (union bound) is:
\[
\Pr(\text{Theorem A.2 fails for any timestep}) \leq 2t\exp\left(-\frac{r\epsilon^2}{8}\right).
\]

Set this total failure probability to \( \delta \):
\[
2t\exp\left(-\frac{r\epsilon^2}{8}\right) \leq \delta.
\]

Solving for \( r \), we require:
\[
r \geq \frac{8}{\epsilon^2} \log\left(\frac{2t}{\delta}\right).
\]

This ensures that both Theorem A.3 and Theorem A.4 hold simultaneously with probability \( \geq 1 - \delta \), which together make Theorem A.5 hold.

\end{proof}

\subsection{Empirical validation of the derived bound in ~Theorem \ref{theorm:main}}
\label{subsec:empirical_evidence}
In this part, we present a visualization of the scaling factor ratio \(\sqrt{n/r}\) derived in Theorem \ref{theorm:main}. The plot demonstrates how the ratio adheres to the theoretical bounds under various rank settings, providing empirical support for the theorem.

Here, we consider the following variants:
\begin{itemize}
    \item \textbf{AdamW with the same structured channel-wise learning rate adaptation rule:} This variant uses a full rank \(n\) and serves as the golden standard for estimating \(s_j\), the scaling factor.
    \item \textbf{APOLLO with rank \(r\):} This variant computes a low-rank approximated version of the scaling factor, \(s_j^R\), which should theoretically be \(\sqrt{n/r}\) times smaller than \(s_j\).
\end{itemize}

We visualize the channel-wise scaling factor on the LLaMA-350M model~\footnote{To ensure consistent optimization trajectories across the variants, we use the same learning rate as APOLLO with rank \(1/4n\). Additionally, we scale the final gradient updates using the heuristic ratio derived from the rank settings relative to \(1/4n\).}, comparing APOLLO with ranks \(1/8n\) and \(1/4n\). These configurations should yield scaling factor ratios of approximately \(\sqrt{1/8}\) (\(\sim 0.35\)) and \(1/2\), respectively, relative to the full-rank AdamW.

As shown in Fig. \ref{fig:scaling_factor_comparison}, the scaling factor ratio adheres closely to the theoretical predictions across different layer types (e.g., attention, MLP) and model stages (e.g., early, middle, or late layers).

\begin{figure*}[ht]
    \centering
    \includegraphics[width=\linewidth]{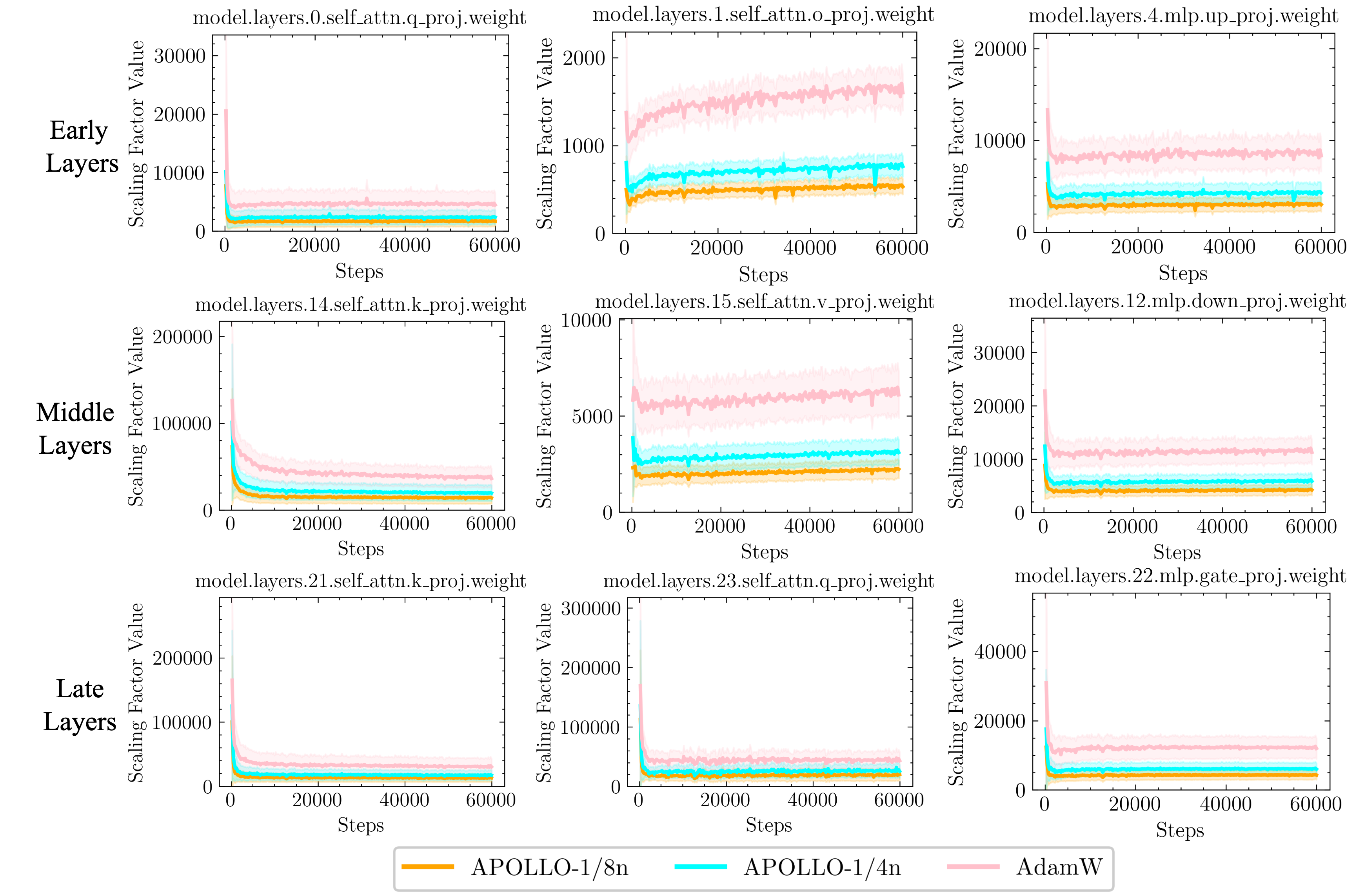}
    \caption{Visualization of the channel-wise scaling factor ratio for APOLLO with rank \(1/8 n\) and \(1/4 n\), compared with AdamW (full rank \(n\)). The empirical data aligns well with the theoretical ratios \(1 : \sqrt{2} : 2\sqrt{2}\), validating the bounds across various layer types and stages on the LLaMA-350M model.}
    \label{fig:scaling_factor_comparison}
\end{figure*}

\subsection{Training throughput of GaLore-type Optimizer on LLaMA-1B}

We further show the training throughput for Galore-type low-rank optimizer (Galore, Fira) in Fig.~\ref{fig:fira:throuput}.
At every 200 update step, they need to call SVD to update the projection matrix, leading to a drastic drop in training throughput.
Although Galore tries to amortize the cost by relaxing the update gap, the significantly high cost is hard to amortize fully as we still keep a short update gap to keep performance; for example, to update the projection matrix for a LLaMA 7B model needs 10 mins, while inference takes seconds.

\begin{figure}
    \centering
    \includegraphics[width=0.9\linewidth]{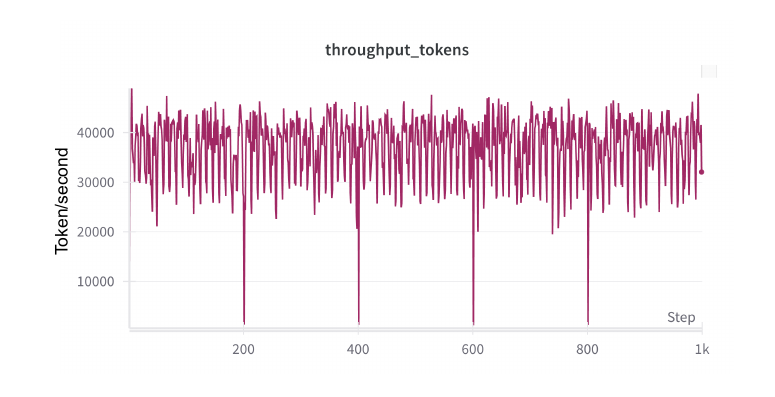}
    \caption{The training throughput of Galore-type low-rank optimizer with many spikes due to the expensive SVD operation every 200 steps.}
    \label{fig:fira:throuput}
\end{figure}

\subsection{Detailed Pre-Training Setting}
\label{sub:setting}
This section provides an overview of the LLaMA architectures and the hyperparameters used during pre-training. To ensure a fair comparison, we adopt the same settings as \citet{zhao2024galore}. Table~\ref{Pre_setting} outlines the hyperparameters for the various LLaMA model sizes. Across all architectures, we use a maximum sequence length of 256 and a batch size of 131K tokens. Additionally, we apply a learning rate warm-up over the first 10\% of training steps, followed by a cosine annealing schedule that gradually reduces the learning rate to 10\% of its initial value.
\begin{table*}[ht]
\centering
\caption{Hyper-parameters of LLaMA architectures for pre-training.}
\vspace{11pt}
\label{Pre_setting}
\resizebox{0.6\linewidth}{!}{
\begin{tabular}{@{}ccccccc@{}}
\toprule
Params & Hidden & Intermediate & Heads & Layers & Steps & Data Amount (Tokens) \\ \midrule
60M    & 512    & 1376         & 8     & 8      & 10K   & 1.3 B       \\
130M   & 768    & 2048         & 12    & 12     & 20K   & 2.6 B       \\
350M   & 1024   & 2736         & 16    & 24     & 60K   & 7.8 B       \\
1 B    & 2048   & 5461         & 24    & 32     & 100K  & 13.1 B      \\
7 B    & 4096   & 11008        & 32    & 32     & 150K  & 19.7 B      \\ \bottomrule
\end{tabular}
}
\end{table*}

\name runs using the same learning rate 0.01 and a subspace change frequency \textit{T} of 200 without tuning, following the Galore open-sourced settings.
The scale factor \( \alpha \) is considered a fractional learning rate, which is set to 1 by default in \name for models with a size of less than 1B, showing our method doesn't need too much tuning like Galore and Fira.
On 1B-model, we set the high-rank \name with a $\alpha=\sqrt{1/2}$ and the high-rank \name w SVD with a $\alpha=0.4$.
As we find the scaling factor increases with the rank $r$, therefore we scale the gradient factor in \namec with setting \(\alpha\) to $\sqrt{128}$.

\subsection{Detailed Fine-Tuning Setting}
\label{sub:setting}

\subsubsection{Commonsense reasoning fine-tuning}

We use the implementation from~\cite{liu2024dora} with the chosen hyperparameters detailed in Table~\ref{tab:llama_commonsense_hyperparameters}.

\subsubsection{MMLU fine-tuning}

We use the implementation from~\cite{zheng2024llamafactory}.
We adopt the implementation from~\cite{zheng2024llamafactory}. For a fair and comprehensive comparison, we set the rank to 8 and sweep the learning rate across the range [5e-6, 7.5-6, 1e-5, 2.5e-5, 5e-5, 7.5e-5, 1e-4, 1.5e-4, 2e-4] for GaLore, Fira, \name, and \namec. Specifically, \namec uses a scaling factor of $\sqrt{4}$ for fine-tuning LLaMA-3-8B and Mistral-7B, while a factor of $1$ is applied to Gemma-7B, as it exhibits higher sensitivity during fine-tuning. The full fine-tuning and LoRA results are taken from~\cite{zhang2024q}.

\begin{table*}[h]
\centering
\caption{Hyperparameter of Llama-3.2-1B on the commonsense reasoning tasks.}
\vskip 0.1in
\small
\begin{tabular}{ccccccccc}
\toprule
\textbf{Hyperparameters} & AdamW & LoRA & DoRA & Galore & Fira & \name w.SVD & \name & \namec\\ \midrule
Rank r & - & 32 & 32 & 32 & 32 & 32 & 32 & 1 \\ 
$\alpha$ & - & 64 & 64 & - & - & - & - & - \\ 
scale & - & - & - & 0.25 & 0.25 & 1.0 & $\sqrt{5}$& $\sqrt{128}$\\
Dropout & \multicolumn{8}{c}{0.05} \\
LR & [2e-5, 5e-5] & 3e-4 & 3e-4 & 3e-4 & 3e-4 & 3e-4 & 3e-4 & 3e-4  \\
LR Scheduler & \multicolumn{8}{c}{Linear} \\
Batch size & \multicolumn{8}{c}{32} \\
Warmup Steps & \multicolumn{8}{c}{100} \\
Epochs & \multicolumn{8}{c}{3} \\
Where & \multicolumn{8}{c}{Q,K,V,Up,Down} \\
\bottomrule
\end{tabular}
\label{tab:llama_commonsense_hyperparameters}
\vskip -0.1in
\end{table*}

%

\end{document}